\definecolor{lightgray}{rgb}{0.9,0.9,0.9}
\definecolor{lightred}{rgb}{1,0.8,0.8}
\definecolor{lightgreen}{rgb}{0.6,1,0.6}
\definecolor{lightyellow}{rgb}{1,1,0.5}
\definecolor{lightgrey}{rgb}{0.8,0.8,0.8}
\newcommand{\prob}{\mathbb{P}}
\newcommand{\scal}{{\cal S}}
\newcommand{\dcal}{{\mathscr D}}
\newcommand{\ncal}{{\cal N}}
\newcommand{\ecal}{{\mathscr E}}
\newcommand{\beq}{\begin{equation}}
\newcommand{\eeq}{\end{equation}}
\newcommand{\one}{\mathbf{1}}
\def\rset{\mathbb{R}}
\def\zset{\mathbb{Z}}
\def\nset{\mathbb{N}}
\newcommand{\ih}{h}
\newcommand{\ihp}{g}
\newcommand*{\mhegz}[1]{\textbf{\textcolor{magenta}{#1}}}
\newcommand{\QD}{AINQ}
\newcommand{\ceil}[1]{\left \lceil #1 \right \rceil }
\newcommand{\norm}[1]{\left \lVert #1 \right \rVert}
\newcommand{\open}[1]{\left ( #1 \right )}
\newcommand{\closed}[1]{\left [#1 \right]}
\newcommand{\ens}[1]{\left \{ #1 \right \} }
\newcommand{\modu}[1]{\left \lvert #1 \right \rvert }
\DeclareMathOperator{\expec}{\mathbb{E}}
\definecolor{britishracinggreen}{rgb}{0.0, 0.26, 0.15}
\newcommand{\xcross}{\textcolor{red}{$\times$}}
\newcommand{\greenV}{\textcolor{britishracinggreen}{$\checkmark$}}
\newcommand{\eps}{\varepsilon}
\newcommand{\bo}{\mathcal{O}}
\newcommand{\Uni}{\mathcal{U}}
\newcommand{\diff}{\mathrm{d}}
\newtheorem{theorem}{Theorem}
\newtheorem{corollary}{Corollary}
\newtheorem{definition}{Definition}
\newtheorem{lemma}{Lemma}
\newtheorem{remark}{Remark}
\newtheorem{example}{Example}
\newtheorem{proposition}{Proposition}
\newtheorem{Conjecture}{Conjecture}
\DeclareMathOperator{\var}{\mathbb{V}}
\DeclareMathOperator{\ent}{\mathcal{H}}
\DeclareMathOperator{\supp}{Supp}
\DeclareMathOperator{\diag}{diag}
\DeclareMathOperator{\argmax}{argmax}
\newcommand{\pr}[1]{\left({#1}\right)}
\begin{document}

% If your paper is accepted and the title of your paper is very long,
% the style will print as headings an error message. Use the following
% command to supply a shorter title of your paper so that it can be
% used as headings.
%
%\runningtitle{I use this title instead because the last one was very long}

% If your paper is accepted and the number of authors is large, the
% style will print as headings an error message. Use the following
% command to supply a shorter version of the authors names so that
% they can be used as headings (for example, use only the surnames)
%
%\runningauthor{Surname 1, Surname 2, Surname 3, ...., Surname n}

\twocolumn[

\aistatstitle{Compression with Exact Error Distribution for Federated Learning}

\aistatsauthor{ Mahmoud Hegazy \And Rémi Leluc \And Cheuk Ting Li \And Aymeric Dieuleveut  }

\aistatsaddress{\'Ecole Polytechnique \\ IPParis, France \And \'Ecole Polytechnique \\ IPParis, France \And Chinese University of Hong Kong \\ Hong Kong, China \And\'Ecole Polytechnique \\ IPParis, France}]
\begin{abstract}
Compression schemes have been extensively used in Federated Learning (FL) to reduce the communication cost of distributed learning. While most approaches rely on a bounded variance assumption of the noise produced by the compressor, this paper investigates the use of compression and aggregation schemes that produce a specific error distribution, \textit{e.g.}, Gaussian or Laplace, on the aggregated data. We present and analyze different aggregation schemes based on \textit{layered quantizers} achieving exact error distribution. We provide different methods to leverage the proposed compression schemes to obtain \textit{compression-for-free} in differential privacy applications. Our general compression methods can recover and improve standard FL schemes with Gaussian perturbations such as Langevin dynamics and randomized smoothing.

\end{abstract}

% %%% 1 - Introduction %%

\section{INTRODUCTION}
\label{sec:1_intro}

Machine learning has become increasingly data-hungry, requiring vast amounts of data to train models effectively. Federated learning (FL) \citep{mcmahan2017communication,karimireddy2020scaffold,kairouz2021advances} has emerged as a promising approach for collaborative machine learning in distributed settings. In FL, multiple parties with their local datasets participate in a joint model training process without exchanging their raw data. However, communication from these devices to a central server can be slow and expensive resulting in a bottleneck. Thus, compression schemes have been widely used to reduce the size of the updates before transmission. Standard compression schemes \citep{konevcny2016federated,alistarh2017qsgd,wen2017terngrad,DBLP:conf/iclr/LinHM0D18,li2022communication} typically assume relatively bounded variance assumptions on the noise produced by the compressor. These schemes do not take into account the specific error distribution of the noise, which may depend on the input distribution. This can be hindering in settings where control of the noise shape can provide tighter analysis or can enable the fulfilment of additional constraints, \textit{e.g.} differential privacy. % or non-smooth optimization.

In a wider scope, different structures of the compression error have been extensively leveraged beyond FL such as in: lossy data compression for audio and music compression \citep{johnston1988transform,sayood2017introduction}; medical imaging \citep{liu2017current,ammah2019robust}; error control in wireless communications \citep{costello1998applications}; quantization in deep learning \citep{rastegari2016xnor} with weight pruning \citep{han2015learning,anwar2017structured}, low-precision weights \citep{jacob2018quantization,jung2019learning} and data perturbation for privacy-preserving machine learning \citep{abadi_deep_2016}. In this context, the main goal of this paper is to answer the following research questions: \textit{(i) Can we go beyond standard assumptions on the variance and ensure a particular (continuous) distribution of the compression error? (ii) What are the benefits, communication cost and applications of such schemes?}

\textbf{Related work.} Recently, different compression techniques with a precise noise distribution have been applied to various machine learning tasks. \citet{UniversallyQuantizedNN2022} apply subtractively dithered quantization \citep{dithering_Roberts1962, ditheringZiv1985,rate_distortion_Zamrir1992} to ensure a uniformly distributed noise in neural compression. Relative entropy coding \citep{havasi2019minimal,flamich2020compressing} aims at compressing the model weights and latent representations to a number of bits approximately given by their relative entropy from a reference distribution. Differential privacy can be achieved by ensuring a precise noise distribution, using lattice quantization \citep{amiri2021compressive, JointPrivacy_Lang23}, minimal random coding \citep{shah2022optimal}, or randomized encoding \citep{chaudhuri2022privacy}. These methods can be regarded as examples of channel simulation \citep{bennett2002entanglement,harsha2010communication,sfrl_trans}, which is a point-to-point compression scheme where the output follows a precise conditional distribution given the input. Refer to \citep{yang2023introduction,theis2022lossy} for more applications of channel simulation on neural compression.

In this paper, we not only consider a point-to-point compression setting, but also a distributed mean estimation and aggregation setting \citep{suresh2017distributed}, which is one of the most fundamental building blocks of FL algorithms \citep{kairouz2021advances}. There are $n$ users holding the data $x_1,\ldots,x_n$ respectively (which can be the gradient as in FedSGD, or the model weights as in FedAvg \citep{mcmahan2017communication}), who communicate with the server to allow the server to output an estimate $Y$ of the mean, with a precise noise distribution $Y-n^{-1}\sum_i x_i \sim Q$, which can then be used to perform model updates with a more precise behavior. 

There are two approaches to this problem, namely \emph{individual mechanisms} where point-to-point compression is performed between each user and the server, and the server simply averages the reconstructions of the data of each user (possibly with some postprocessing); and \emph{aggregate mechanisms} where the encoding functions of all users are designed as a whole to ensure a precise noise distribution of the final estimate $Y$.
For individual mechanisms, we study the communication costs of the direct and shifted layered quantizers based on \citep{wilson2000layered,hegazy2022randomized}. For aggregate mechanisms, we propose a novel method, called the \emph{aggregate Gaussian mechanism}, to ensure that the overall noise distribution $Q$ is exactly Gaussian, and analyze its communication cost.

In particular, for differential privacy (DP), the above schemes allow us to consider two trust settings. First, for a completely trusted server, if the clients wish to prevent the output of the server from leaking information on their data $x_1,\ldots,x_n$, a DP restriction \citep{dwork2006calibrating} can be imposed on the output $Y$. This is achieved by requiring the noise distribution $Q$ to be a privacy-preserving noise distribution. For example, a Gaussian noise can guarantee $(\varepsilon,\delta)$-DP \citep{dwork2014algorithmic} and  R\'enyi DP \citep{mironov2017renyi}. Second, when the server is \textit{less-trusted}, the clients wish for the server to not know their individual datapoints but trust it to faithfully carry out some postprocessing to make the output DP against external observers. In this case, it is vital to ensure that the compression mechanism is \emph{homomorphic}, and the messages sent from the users to the server can be aggregated before decoding, so that it is compatible with secure aggregation (SecAgg) techniques such as \citep{bonawitz2017practical} and other homomorphic cryptosystems. The aggregate Gaussian mechanism we propose is homomorphic, making it suitable for both privacy concerns of trusted and less-trusted servers.

\textbf{Contributions.} (1) We propose different quantized aggregation schemes based on \textit{layered quantizers} that produce a specific error distribution, \textit{e.g.} Gaussian or Laplace. (2) We provide theoretical guarantees and practical implementation of the developed aggregate mechanisms. (3) We exhibit FL applications in which we directly benefit from an exact Gaussian noise distribution, namely \textit{compression for free} with differential privacy and compression schemes for Langevin dynamics and randomized smoothing.

\textbf{Notation.} The term $\log$ refers to the logarithm in base $2$ while $\ln$ denotes the natural logarithm. The floor and ceil functions are denoted by $\lfloor \cdot \rfloor$ and $\lceil \cdot \rceil$ respectively with $\lceil x \rfloor := \lfloor x + 1/2 \rfloor$. For $n \in \nset$, we refer to $\{1,\ldots,n\}$ with the notation $[n]$. For a discrete probability distribution $p_X$ supported on $\mathcal{X}$ and random variable $X \sim p_X$, $\ent(X)$ denotes the entropy of $X$, \textit{i.e.} $\ent(X) = \ent(p_X)= - \sum_{x \in \mathcal{X}} p_X(x) \log p_X(x)$. $\ent(X|Y)$ denotes the conditional entropy of $X$ given $Y$, \textit{i.e.,} $\ent(X|Y) = \mathbb{E}[\ent(p_{X|Y}(\cdot | Y)]$. For a continuous probability distribution $f_X$ supported on $\mathcal{X}$ and random variable $X \sim f_X$, $h(X)$ denotes the differential entropy of $X$, \textit{i.e.} $h(X) = - \int_{\mathcal{X}} f_X(x) \log f_X(x) \mathrm{d}x$. The Lebesgue measure is denoted by $\lambda$ and $\mathcal{U}(a,b)$ with $a<b$ refers to the uniform distribution on $(a,b)$. All proofs, additional details and experiments are available in the appendix.
\vspace{-0.18cm}

% %%%%%%%%%%%%%%%%%%%%%%%%%%%%%%%%%%%%%%%%%%%%%%%%%%%%%%%%%%%%
% %%% 2 - Quantized Aggregation with Exact Noise Distribution %%%
% \input{old/new_scalar_old}
%\section{Problem Statement and Motivations}
\section{BACKGROUND, MOTIVATION}
\label{sec:quant_agg}

\textbf{Quantized aggregation.} Consider $n$ clients ($n \geq 1$). In the standard FL setting, the training process relies on locally generated randomness at individual participant devices. In this paper, we allow the clients and the server to have shared randomness. Practically, shared randomness can be generated by sharing a small random seed among the clients and the server, allowing them to generate a sequence of shared random numbers. We will see later that the existence of shared randomness can greatly simplify the schemes.

Let $S_i \in \mathcal{S}$ be the shared randomness between client $i$ and the server, and $T \in \mathcal{T}$ be the global shared randomness between all clients and the server. When building an FL algorithm, one is allowed to choose the joint distribution $P_{(S_i)_i,T}$ where these variables are usually (but not necessarily) taken to be mutually independent. Client $i\in [n]$ holds the data $x_{i}\in\mathbb{R}^d$ and for privacy and communication constraints, performs encoding to produce the description $M_{i}=\mathscr{E}(x_{i},S_{i},T) \in \mathcal{M}$, where $\mathscr{E}:\mathbb{R}^d\times\mathcal{S}\times\mathcal{T}\to\mathcal{M}$ is the \emph{encoding function}, and $\mathcal{M}$ is the set of descriptions (usually taken to be $\mathbb{Z}^d$). Given the descriptions $M_{1},\ldots,M_{n}$, the server produces the reconstruction $Y=\overline{\mathscr{D}}(M_{1},\ldots,M_{n},S_{1},\ldots,S_{n},T)$ which is an estimate of the average $n^{-1}\sum_{i=1}^{n}x_{i}$, where $\overline{\mathscr{D}}:\mathcal{M}^{n}\times\mathcal{S}^{n}\times\mathcal{T}\to\mathbb{R}^d$ is the \emph{overall decoding function}. The goal is to control the distribution of the quantization error as described in the next definition.

\begin{definition}(Aggregate AINQ mechanism) A quantization scheme with $n$ clients holding data $x_1,\ldots,x_n$ and a server producing $Y$ satisfies the Additive Independent Noise Quantization (\QD{}) property if the quantization error follows a target distribution $Q$ regardless of $\ens{x_i}_{i=1}^n$, i.e., 
\begin{equation}
    Y-\left(\frac{1}{n}\sum_{i=1}^{n}x_{i}\right) \sim Q.\label{eq:ainq_agg}
\end{equation} 
\end{definition}
A special case is when $n=1$, which we call \emph{point-to-point \QD{} mechanism}. In this case, an \QD{} mechanism with shared randomness  $S \in \scal$ can be performed in the following steps: (1) sample $S\sim P_S$, then (2) encode $M=\ecal(x,S)$ and (3) decode $Y=\dcal(M,S)$. We do not require the global shared randomness $T$ here since it plays the same role as $S$. We require the quantization noise to follow a particular distribution $Y-x \sim Q$. The simplest 
% \textit{point-to-point \QD{} scheme} 
point-to-point \QD{} mechanism
is subtractively dithered quantization, which produces a uniform noise distribution.

\begin{example}(Subtractive Dithering)\label{ex:sub_dither} For a given step size $w>0$ and input $X$, subtractive dithering works by sampling $S\sim \Uni(-1/2,1/2)$, encoding the message $M=\lceil X/w+S \rfloor$, and decoding $Y=\open{M-S}w$. Then $Y-X\sim \Uni(-w/2;w/2)$ and is independent of $X$.
\end{example} 
An aggregate \QD{} mechanism for $n$ users can be constructed via a point-to-point \QD{} mechanism. For this approach to be applicable, the overall quantization noise must have a divisible distribution that can be expressed as a sum of $n$ \textit{i.i.d.} random variables.

\begin{definition}(Individual \QD{} mechanism)\label{def:ind_ainq}
An individual \QD{} mechanism is an aggregate \QD{} mechanism built via a point-to-point \QD{} mechanism where the overall quantization noise $Y - \sum_i x_i$ is divisible, $T$ is empty, the shared randomness is $S_1,\ldots,S_n$ which are i.i.d. copies of $S$, user $i$ produces $M_i = \ecal(x_i,S_i)$ and the server outputs $Y=n^{-1} \sum_i \dcal(M_i,S_i)$. 
\end{definition}

\textbf{Application 1: FL and Differential Privacy.}  The inherent sensitivity of individual data raises significant concerns about privacy breaches in the distributed setting. To address these challenges, the integration of differential privacy into federated learning has emerged as a compelling approach \citep{abadi_deep_2016,truex_hybrid_2019,wei2020federated,noble2022differentially}.

\begin{definition}\label{def:dp}(Differential-Privacy (DP)) Any algorithm $\mathcal{A}$ is $(\varepsilon,\delta)$-differentially private, if for all adjacent datasets $\mathcal{D}_1$ and $\mathcal{D}_2$ and all subsets $E \subset \mathrm{Im}(\mathcal{A})$ \vspace{-0.1cm}
\begin{equation} \label{eq:dp}
    \mathbb{P}(\mathcal{A}(\mathcal{D}_1) \in E) \leq e^{\varepsilon} \mathbb{P}(\mathcal{A}(\mathcal{D}_2) \in E) + \delta,
\end{equation} \vspace{-0.15cm}
where $\mathbb{P}$ is over the randomness used by algorithm $\mathcal{A}$.
\end{definition}
The Gaussian mechanism injects controlled noise into computations, allowing for a balance between privacy protection and data utility. For a function $f$ that operates on a dataset $\mathcal D$, it is defined as \vspace{-0.1cm}
\begin{equation} \label{eq:gauss_mech}
\mathrm{G}(\mathcal D) = f(\mathcal D)+\mathcal{N}(0,\sigma^2\mathrm{I}) \vspace{-0.1cm}
\end{equation}
and it is guaranteed to be $(\eps, \delta)$-DP if the noise satisfies $\sigma^2 \geq 2\Delta^2_2 \ln{(1.25/\delta})/\eps^2$ \citep{dwork2014algorithmic} where $\Delta_2 = \sup_{\mathcal{D}_1, \mathcal{D}_2} \norm{f(\mathcal{D}_1)-f(\mathcal{D}_2)}_2$ for $\mathcal{D}_1$ and $\mathcal{D}_2$ differing on one element. While common privacy-preserving approaches rely on adding a Gaussian or Laplace noise on top of compression schemes, one can leverage AINQ mechanisms to directly obtain privacy guarantees with a reduced communication cost. For example, setting the compression error to be a properly scaled Gaussian recovers the Gaussian mechanism of Eq.\eqref{eq:gauss_mech}. 

\textbf{Application 2: FL and Langevin dynamics.} When solving the Bayesian inference problem in the FL setting \citep{vono2022qlsd}, compression operators $\mathscr{C}:\rset^d \to \rset^d$ are commonly unbiased and have a bounded variance. For a loss function $H$ deriving from a potential, the stochastic Langevin dynamics starts from $\theta_0 \in \rset^d$ and is updated as 
\begin{equation*}
\theta_{k+1} = \theta_{k} - \gamma H(\theta_k) + \sqrt{2 \gamma} Z_{k+1},    
\end{equation*}
with $Z_k \sim \mathcal{N}_d(0,\mathrm{I}_d)$ and $\gamma >0$. Using compression with exact error distribution, one can precisely control the distribution of the error $(\mathscr{C}(X) - X)$ and recover the QLSD scheme of \citet{vono2022qlsd} with a reduced communication cost. This may be done via the quantizer $\mathscr{C}_\gamma$ such that $\mathscr{C}_{\gamma}(X) - X \sim \mathcal{N}_d(0,2\mathrm{I}_d/\gamma)$ along with the update rule $\theta_{k+1} = \theta_{k} - \gamma \mathscr{C}_{\gamma}(H(\theta_k))$. (See Appendix \ref{app:langevin} for more details) %Additional details and experiments are available in the appendix.

\textbf{Application 3: FL and Randomized Smoothing.} Fast rates for non-smooth optimization problems of the form $\min_{\theta \in \rset^d} \{f(\theta)=\sum_{i=1}^n f_i(\theta)\}$ can be attained using the smoothing approach of \citet{duchi_randomized_2012,scaman_optimal_2018}. These accelerated algorithms such as Distributed Randomized Smoothing (DRS) rely on a \textit{smoothed} version $f_\sigma$ of $f$ defined by 
\begin{equation*}
    f_{\sigma}(\theta) = \expec_\xi[f(\theta+\sigma \xi)],
\end{equation*}
where $\xi \sim \mathcal{N}(0,\mathrm{I}_d)$ and $\sigma>0$. Each client approximate the smoothed gradient with a subgradient $g_i$ evaluated at $m$ perturbed points $g_i(\theta + \sigma \xi_j)$ for $j \in [m]$. Interestingly, the sampling steps may be replaced with compressors that produce exact error distribution. In the spirit of \citet{philippenko2021preserved}, one can first compress the model parameter $\theta$ with a Gaussian error distribution as $\ecal(\theta) = \theta + \sigma \xi$ and then evaluate the subgradients at compressed point as $g_i(\ecal(\theta))$ to recover the classical DRS algorithm.

% %%%%%%%%%%%%%%%%%%%%%%%%%%%%%%%%%%%%%%%%%%%%%%%%%%%%%%%%%%%%
% %%% 3 - Individual mechanisms %%%
\section{INDIVIDUAL MECHANISMS} \label{sec:indiv}

In order to obtain a target noise distribution exactly, we describe two point-to-point \QD{} mechanisms based on \citet{hegazy2022randomized} and on the layered multishift coupling in \citet{wilson2000layered}, which can be used to construct $n$-client individual AINQ mechanisms using Def. \ref{def:ind_ainq}. For geometric interpretations, we refer to them respectively as the \textit{direct layered quantizer} and the \textit{shifted layered quantizer}. They both rely on subtractive dithering with step size $w$ to generate an error which follows a uniform distribution $\Uni(-w/2,w/2)$, as in Example \ref{ex:sub_dither}, but with a random step size $w$.

\subsection{Individual Mechanisms via Direct and Shifted Layered Quantizers}\label{sec:layered}

Consider a random variable $Z$ following a unimodal distribution $f_Z$. Instead of having a deterministic value of $w$, it is randomly sampled from an appropriate distribution ensuring that the marginal error follows $f_Z$. Define the superlevel set $\mathcal{L}_x(f_Z) \coloneqq \{u \in \rset: f_Z(u) \geq x \}$ for any $x \in \rset$ and let $\bar Z \coloneqq \max f_Z$.  For any $x\in \open{0,\bar Z}$, define $b_Z^-(x) = \inf \mathcal{L}_{x}(f_Z)$ and $ b_Z^+(x) = \sup \mathcal{L}_{x}(f_Z)$. % and $b_Z(x) =(b_Z^-(x)+b_Z^+(x))/2.$ 
As illustrated in Figure \ref{fig:shifted_vs_direct}, for a unimodal distribution, sampling under the graph can be thought of as sampling from a infinite mixture of uniform distributions over continuous intervals.

\textbf{Direct Layered Quantizer.} In order to sample from a real continuous random variable, it is sufficient to uniformly sample in the area under its graph and then project onto the $x$-axis. The construction below has been mentioned in the special case of Gaussian noise in \citet{UniversallyQuantizedNN2022}, and studied in the general unimodal case in \citet{hegazy2022randomized}.

\begin{definition}(Direct layered quantizer)
\label{def:direct}
Given any $Z$ with a unimodal p.d.f. $f_Z$, the direct layered quantizer, producing an error $Z$, is defined using the encoder $\ecal$, the decoder $\dcal$, and $P_S$ such that $S=(U,D_Z)$ with $U \sim \Uni(0,1)$ independent of $D_Z \sim f_{D}$ where the p.d.f. $f_{D}$ is defined through the superlevel sets of $f_Z$. For all $x \in \rset$, $f_{D}(x) = \lambda(\mathcal{L}_x(f_Z))=\open{b_Z^+(x)-b_Z^-(x)}\one_{ (0,\bar{Z})}(x)$ and 
\begin{align*}
    &M= \ecal(X,S) = \lceil X/f_{D}(D_Z) + U \rfloor,\\
    &\dcal(M,S) = \open{M-U} f_{D}(D_Z) + \frac{b_Z^+(D_Z)+b_Z^-(D_Z)}{2} \cdot
\end{align*}
\end{definition}

 \begin{figure}[t]
     \centering
     \includegraphics[width=8cm]{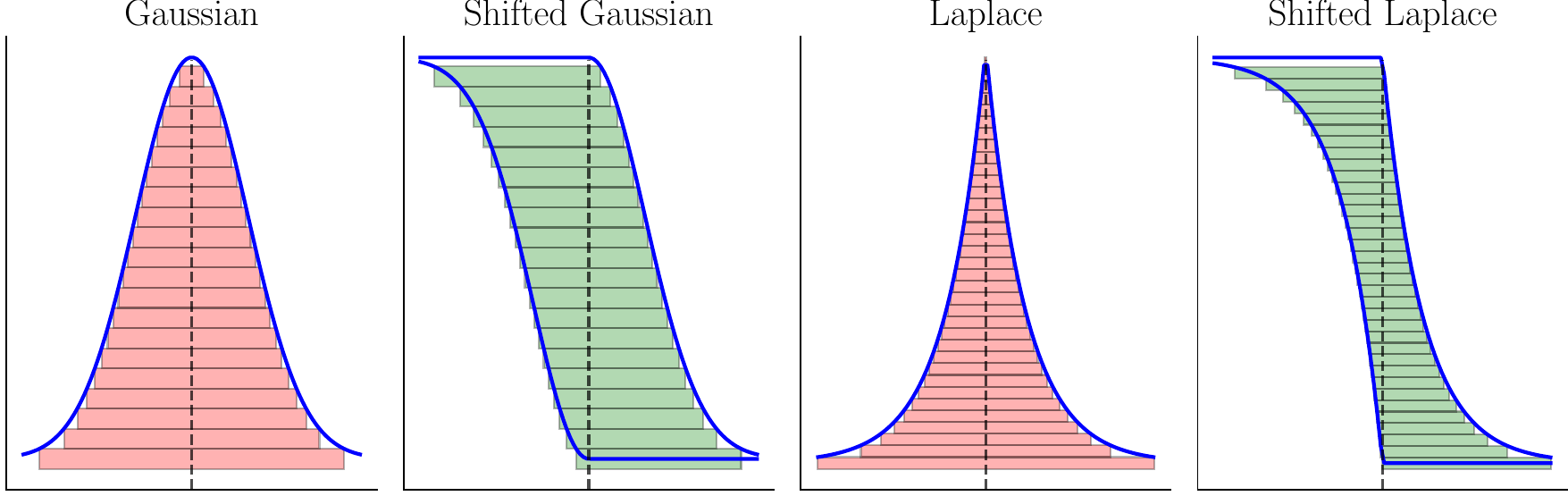}
     \caption{Illustration of the sampling area for different distribution: (shifted) Gaussian and (shifted) Laplace.}
     \label{fig:shifted_vs_direct}
     \vspace{-0.1cm}
 \end{figure}

\textbf{Shifted Layered Quantizer.}
Another approach for leveraging subtractive dithering %to generate a quantization scheme with an independent additive noise distribution 
is based on multishift coupling, as described in \cite{wilson2000layered}. It is based on the idea of using a sequence of shifted uniform distributions to generate a sequence of target distributions. The key idea is that the sampling from an area $A$ and then projecting onto the $x$-axis is equivalent to the sampling from any horizontal reflection of $A$ and then projecting onto the $x$-axis. Thus, one can can flip one side of a unimodal distribution and still take samples from the part under the flipped side (see Figure \ref{fig:shifted_vs_direct} with the shifted distributions).
\begin{definition} 
    \label{def:shifted}
    (Shifted layered quantizer) Given any $Z$ with a unimodal p.d.f. $f_Z$, the layered randomized quantizer, producing an error $Z$, is defined using the encoder $\ecal$, the decoder $\dcal$ and randomness $P_S$ such that $S=(U,W_Z)$ with $U \sim \Uni(0,1)$ independent of $W_Z \sim f_{W}$ where the p.d.f. $f_{W}$ is defined through the superlevel sets of $f_Z$. For all $x \in \rset$ ,
\begin{align*}\label{eq:shifted_layered_eq}
&f_{W}(x) = \open{b_Z^+(x)-b_Z^-(\bar Z-x)}\one_{\open{0,\bar Z}}(x), \\
&\ecal(X,S) = \lceil X/ f_W(W_Z) + U \rfloor,\\
&\dcal(M,S) =\open{M\! -\! U} f_W(W_Z) + \frac{b_Z^+(W_Z)+b_Z^-(\bar Z \! -\! W_Z)}{2} .
\end{align*}
\end{definition}

By construction, the shifted layered quantizer satisfies the \QD{} property (see details in Appendix \ref{subsec:equiv_quantizers}).

\subsection{Communication Complexity}

To transmit the integer $M$, we should encode it into bits. There are two general approaches. First, if a fixed-length code is used where $M$ is always encoded to the same number of bits, then $\lceil \log |\supp M| \rceil$ bits are required.
%$\lceil H_0(M) \rceil$ bits are required, where $H_0(M):= \log |\supp M| = \log |\{m:\,p_M(m)>0\}|$ is the Hartley entropy. 
Second, if a variable-length code is used, for example, if we encode $M$ using the Huffman code \citep{huffman1952method} on the conditional distribution $p_{M|S}$, then we require an expected encoding length bounded between $\ent(M|S)$ and $\ent(M|S)+1$.
%where the encoding length can depend on $S$, then the expected encoding length is at most $H_0(M|S)+1$, where $H_0(M|S) := \mathbb{E}[\log |\{m:\,p_{M|S}(m|S)>0\}|]$.
We will see that the direct layered quantizer gives a better variable-length performance, whereas the shifted layered quantizer gives a better fixed-length performance.

For variable-length codes, it has been shown in \citep[Theorem~4]{hegazy2022randomized} that every \QD{} scheme with error distribution $f_Z$ and uniform input $X\sim \Uni(0,t)$ must satisfy 
\begin{equation} \label{eq:univ_lower2}
      \ent(M|S) \geq \log(t)+h(D_Z),
\end{equation}
where $D_Z$ is defined in Definition \ref{def:direct}, and $-h(D_Z)$ is called the \emph{layered entropy}. It has also been shown in \citet{hegazy2022randomized} that the direct layered quantizer is almost optimal, in the sense that as long as $Z$ has a unimodal distribution and $\supp(X) \subseteq \closed{0,t}$, it achieves
\begin{equation}\label{eq:upper_direct2}
    \ent(M|S) \leq \log(t)+\frac{8\log{(e)}}{t}\sqrt{\var\closed{Z}}+h(D_Z).
\end{equation}
The gap between \eqref{eq:univ_lower2} and \eqref{eq:upper_direct2} tends to $0$ as $t \to \infty$.

While the shifted layered quantizer is not asymptotically optimal, the optimality gap (the gap between the $\ent(M|S)$ attained and its optimal value) is still relatively small as shown below.

\begin{proposition} (Optimality Gap) \label{prop:gap2} For a target unimodal noise distribution $f_Z$ that is symmetric, the shifted layered quantizer achieves
\begin{equation*} \label{eq:upper_shifted}
 \ent(M|S) \leq \log(t)+\frac{8\log(e)}{t}\sqrt{\var\closed{Z}}+h(W_Z),
\end{equation*}
and the optimality gap of using the shifted layered quantizer is upper bounded by $(8\log(e)/t)\sqrt{\var\closed{Z}} + 2$.
\end{proposition}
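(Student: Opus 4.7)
The plan is to follow the argument that yields the bound in~(4) for the direct layered quantizer, substituting $W_Z$ for $D_Z$, and then to compare $h(W_Z)$ with $h(D_Z)$ via a mixture representation of $f_W$ available only in the symmetric case.

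First I would derive a pointwise bound on the conditional entropy. Fix $S=(u,w)$ and write $s = f_W(w)$. With $X\sim\Uni(0,t)$, the integer $M = \lceil X/s + u\rfloor$ takes at most $\lceil t/s\rceil + 1 \le t/s + 2$ distinct values, so $\ent(M\mid S=(u,w)) \le \log(t/s + 2) = \log t - \log s + \log(1 + 2s/t)$. The elementary inequality $\log(1+x)\le(\log e)\,x$ turns the last term into $(2\log e/t)\,s$; taking expectation over $S$ and recognising $\mathbb{E}[-\log f_W(W_Z)] = h(W_Z)$ produces $\ent(M\mid S) \le \log t + h(W_Z) + (2\log e/t)\,\mathbb{E}[f_W(W_Z)]$.

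It remains to bound $\mathbb{E}[f_W(W_Z)]$. The symmetry of $f_Z$ yields $-b_Z^-(y)=b_Z^+(y)$, hence $f_W(x) = b_Z^+(x) + b_Z^+(\bar Z - x)$, and $(a+b)^2\le 2(a^2+b^2)$ together with the substitution $x\mapsto \bar Z - x$ gives $\mathbb{E}[f_W(W_Z)] = \int f_W^2 \le 4\int (b_Z^+)^2$. The layer-cake identities $\int b_Z^+ = 1/2$ and $\int (b_Z^+)^3 = \tfrac{3}{2}\var[Z]$, combined with Cauchy--Schwarz applied to $\int (b_Z^+)^{1/2}\cdot(b_Z^+)^{3/2}$, yield $\int(b_Z^+)^2 \le (\sqrt{3}/2)\sqrt{\var[Z]}$, so that $\mathbb{E}[f_W(W_Z)] \le 2\sqrt{3}\,\sqrt{\var[Z]} \le 4\sqrt{\var[Z]}$. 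Plugging in gives the first claimed inequality.

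For the optimality gap, subtracting the universal lower bound from~(3) leaves $h(W_Z) - h(D_Z) + (8\log e/t)\sqrt{\var[Z]}$, so it suffices to show $h(W_Z) - h(D_Z) \le 2$. In the symmetric case $f_D(x) = 2b_Z^+(x)$, and a direct computation gives the key identity $f_W(x) = \tfrac{1}{2}\bigl(f_D(x) + f_D(\bar Z - x)\bigr)$. This identifies $W_Z$ in distribution with $B\,D_Z + (1-B)(\bar Z - D_Z)$, where $B\sim\mathrm{Bernoulli}(1/2)$ is independent of $D_Z$. The chain rule for differential entropy then gives $h(W_Z) \le h(W_Z, B) = h(W_Z\mid B) + H(B) = h(D_Z) + 1 \le h(D_Z) + 2$, closing the argument.

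The main obstacle is precisely this last comparison: defined through superlevel sets of $f_Z$, the densities $f_W$ and $f_D$ are not obviously related, and naive attempts (e.g.\ through a KL bound) would not yield a constant uniform in $f_Z$. The symmetrization identity $f_W = \tfrac{1}{2}(f_D(\cdot) + f_D(\bar Z - \cdot))$ is the crucial ingredient, and it uses the symmetry of $f_Z$ essentially; this is why the proposition restricts to symmetric targets.
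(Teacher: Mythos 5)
Your proof is correct, and both of its nontrivial steps take a genuinely different route from the paper. For the bound $\int_0^{\bar Z} f_W^2 \le 4\sqrt{\var[Z]}$, the paper introduces an explicit coupling $(Z,S)$ with $S\sim W_Z$ and $Z\mid S$ uniform on a layer, and then invokes a median-minimization argument together with Jensen to arrive at $4\sqrt{\var[Z]}$; your route via the layer-cake identities $\int b_Z^+ = 1/2$, $\int (b_Z^+)^3 = \tfrac{3}{2}\var[Z]$ and Cauchy--Schwarz is more self-contained and in fact yields the slightly sharper constant $2\sqrt{3}\approx 3.46$. For the entropy gap, the paper writes out $h(D_Z)$ and $h(W_Z)$ as explicit integrals against $2b_Z^+$ and bounds the resulting log-ratio pointwise by $\log 2 = 1$; your observation that $f_W = \tfrac12\bigl(f_D(\cdot)+f_D(\bar Z - \cdot)\bigr)$, hence $W_Z$ is a $\tfrac12$--$\tfrac12$ mixture of $D_Z$ and $\bar Z - D_Z$, combined with $h(W_Z) - h(W_Z\mid B) = I(W_Z;B) \le H(B) = 1$, is cleaner, more conceptual, and again gives the tighter bound $h(W_Z)-h(D_Z)\le 1$. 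Two small notational remarks: the inequality you write as $h(W_Z)\le h(W_Z,B)$ is shorthand for the mutual-information step above, which is the correct justification since $W_Z$ is continuous and $B$ discrete; and the count of $|\supp M|$ should really be $\le t/s + 3$ when accounting for both endpoints after rounding, but this only shifts a low-order constant and the paper makes the same approximation.
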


Figure \ref{fig:layered_quant_compare} below shows the conditional entropy $\ent(M|S)$ needed to simulate a Gaussian noise and a Laplace noise with standard deviation $\sigma \in \{1;3\}$ according to the support size $t$ where $X\sim \Uni(0,t)$. The gap is smaller than $1$ bit for all the values computed.

The advantage of the shifted layered quantizer is its fixed-length performance. A fixed-length code has the advantage that we do not have to build a Huffman code on the conditional distribution $p_{M|S}$ for each $S$, which may be infeasible. Since the shifted layered quantizer has a quantization step size bounded away from $0$ as shown in Figure \ref{fig:shifted_vs_direct}, it provides a fixed upper bound on the number of quantization bits.

 \begin{figure}[h!]
  \centering
  \subfigure[Gaussian error]{
  \includegraphics[scale=0.27]{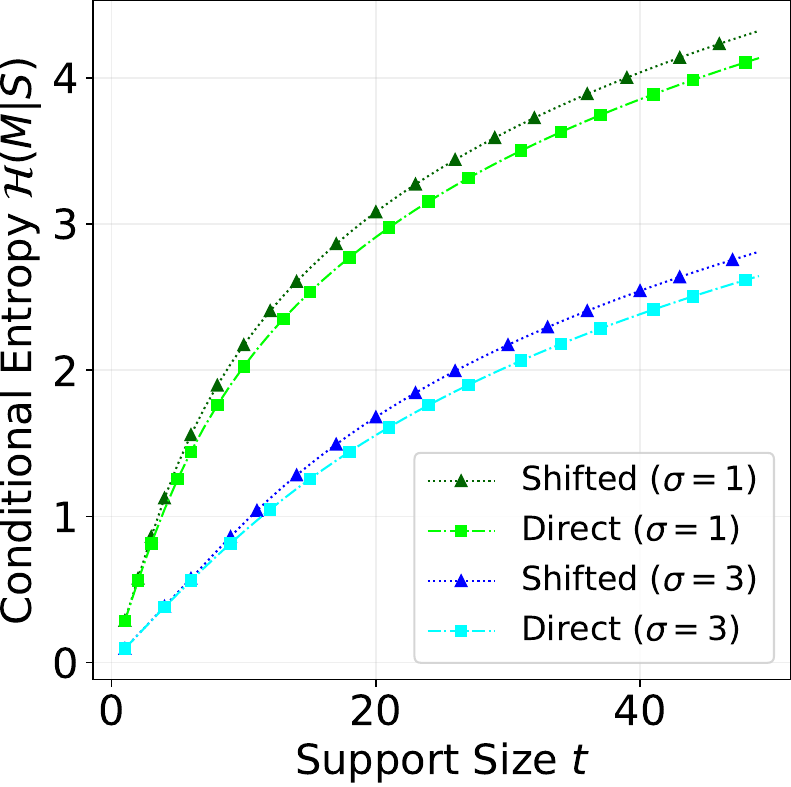}\label{fig:gaus_cond_entropy}}
  \subfigure[Laplace error]{
  \includegraphics[scale=0.27]{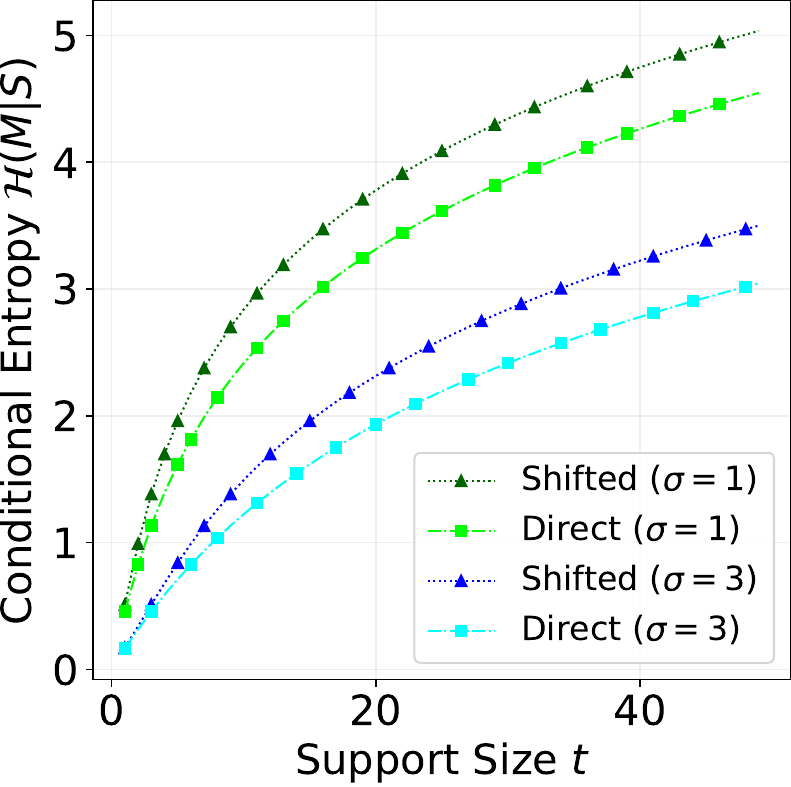}\label{fig:lap_cond_entropy}}
    \caption{\label{fig:layered_quant_compare}Conditional entropy $\mathcal{H}(M|S)$ of the layered quantizers with Gaussian/Laplace error distribution.}
    \label{fig:entropies}
\end{figure}

\begin{proposition}(Minimal step size for shifted layered) \label{prop:min_step2}
For a required $f_Z$, denote by $\eta_Z$ the minimal step size of the shifted layer quantizer, i.e. $\eta_Z\coloneqq \min f_W$. Assume that the input $X$ lies in a fixed interval of length $t$. \\
{\rm \textbullet} \ If $Z \sim \text{Laplace}(0,\sigma/\sqrt{2})$ then $\eta_Z = \sigma \sqrt{2} \ln{2}$ and $\modu{\supp M}\leq 2 + t/(\sigma \sqrt{2} \ln{2})$. \\ %i.e. $H(M)\leq \log\open{\frac{t}{\sqrt{2}\ \sigma \ln{2}}+2}$ \\
{\rm \textbullet} \ If $Z \sim \mathcal{N}(0,\sigma^2)$ then $\eta _Z= 2 \sigma  \sqrt{\ln 4}$ and \\ $\modu{\supp M}\leq 2 + t/(2 \sigma  \sqrt{\ln 4})$.
\end{proposition}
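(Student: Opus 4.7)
The plan is to split the proof into two stages: first, compute $\eta_Z = \min_{x \in (0, \bar Z)} f_W(x)$ in closed form for each of the two noise families; second, translate this minimum step size into the bound on $\modu{\supp M}$ via the structure of the encoder.

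Both densities are symmetric and unimodal around $0$, so $b_Z^-(x) = -b_Z^+(x)$ for $x \in (0, \bar Z)$ and Definition \ref{def:shifted} reduces to $f_W(x) = b_Z^+(x) + b_Z^+(\bar Z - x)$. Inverting $f_Z$ on the positive half-axis, one obtains $b_Z^+(x) = (\sigma/\sqrt 2)\ln(\bar Z/x)$ with $\bar Z = 1/(\sigma\sqrt 2)$ in the Laplace case and $b_Z^+(x) = \sigma\sqrt{2\ln(\bar Z/x)}$ with $\bar Z = 1/(\sigma\sqrt{2\pi})$ in the Gaussian case. For Laplace the minimization is immediate: $f_W(x) = (\sigma/\sqrt 2)\ln[\bar Z^2/(x(\bar Z - x))]$ is decreasing in $x(\bar Z - x)$, so its minimum is attained at $x = \bar Z/2$ and equals $(\sigma/\sqrt 2)\ln 4 = \sigma\sqrt 2\,\ln 2$, as claimed.

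The Gaussian case is more delicate. After the change of variables $t = x/\bar Z \in (0,1)$, the objective reads $f_W(x)/\sigma = \sqrt{-2\ln t} + \sqrt{-2\ln(1-t)}$, a function symmetric under $t \leftrightarrow 1-t$ that takes the value $2\sqrt{2\ln 2} = 2\sqrt{\ln 4}$ at $t = 1/2$. To verify that $t = 1/2$ is the \emph{global} minimizer I would rewrite the first-order stationarity as $\psi(t) = \psi(1-t)$ for $\psi(u) := u\sqrt{-\ln u}$, and prove the strict inequality $\psi(1-t) > \psi(t)$ on $(0, 1/2)$. Squaring, this reduces to $R(t) := t^2 \ln t - (1-t)^2 \ln(1-t) > 0$ on $(0, 1/2)$. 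Since $R(0) = R(1/2) = 0$ and $R'(t) = 1 - 2H(t)$ with $H$ the binary entropy in nats, $R'$ is convex in $t$, satisfies $R'(0) = 1 > 0$ and $R'(1/2) = 1 - 2\ln 2 < 0$, so it has exactly one sign change on $(0, 1/2)$; hence $R$ increases and then decreases on $(0, 1/2)$, staying strictly positive. Combined with the blow-up $f_W(x) \to \infty$ at the endpoints, this pins $\bar Z/2$ as the unique minimizer and gives $\eta_Z = 2\sigma\sqrt{\ln 4}$.

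For the support size, one rewrites the encoder as $M = \lfloor X/f_W(W_Z) + U + 1/2 \rfloor$ using the paper's convention $\lceil \cdot \rfloor = \lfloor \cdot + 1/2 \rfloor$, and conditions on the shared randomness $S = (U, W_Z)$. With $S$ fixed, the map $X \mapsto M$ is piecewise constant with constancy intervals of width $f_W(W_Z) \geq \eta_Z$ along the $X$-axis; so as $X$ sweeps an interval of length $t$ the variable $M$ takes at most $\lfloor t/f_W(W_Z)\rfloor + 2 \leq t/\eta_Z + 2$ distinct integer values, which is exactly the claimed bound for both distributions. The main technical obstacle in the whole argument is the global minimality verification in the Gaussian case, i.e., the positivity of $R$ on $(0, 1/2)$; every other step is routine calculation.
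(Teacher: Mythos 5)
Your proof is correct, and it is in fact more self-contained than the paper's. The paper only treats the Laplace case explicitly: it writes $f_W(x)=-b\ln(2bx)-b\ln(1-2bx)$ for $Z\sim\text{Laplace}(0,b)$, solves $\mathrm{d}f_W/\mathrm{d}x=0$ to get $x^\star=1/(4b)$, evaluates $f_W(x^\star)=2b\ln 2$, and then substitutes $b=\sigma/\sqrt{2}$; for the Gaussian case it simply cites Wilson (2000, p.~22) and does not reproduce the argument. Your route differs in two useful ways. For Laplace, instead of solving a stationarity equation you observe that $f_W(x)=(\sigma/\sqrt 2)\ln\bigl[\bar Z^2/(x(\bar Z-x))\bigr]$ is a decreasing function of the concave quantity $x(\bar Z-x)$, which is maximized at $x=\bar Z/2$ — this immediately locates the global minimizer with no calculus beyond the elementary quadratic. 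For Gaussian, you supply the argument the paper outsources: after the substitution $t=x/\bar Z$ the first-order condition becomes $\psi(t)=\psi(1-t)$ with $\psi(u)=u\sqrt{-\ln u}$, and you reduce strict monotonicity of $\phi(t)=\sqrt{-2\ln t}+\sqrt{-2\ln(1-t)}$ on $(0,1/2)$ to the positivity of $R(t)=t^2\ln t-(1-t)^2\ln(1-t)$, proved via $R'(t)=1-2H(t)$ (binary entropy in nats) being convex — indeed $R'''=-2H''=2/(t(1-t))>0$ — with $R'(0^+)=1>0$ and $R'(1/2)=1-2\ln 2<0$, so $R'$ has exactly one sign change and $R$ rises then falls, vanishing only at the endpoints. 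This gives a complete global-minimality verification that neither the paper (for Gaussian) nor even the paper's Laplace proof (which stops at the first-order condition) makes explicit. The support-size bound is handled the same way in both: with $S$ fixed, $M=\lceil X/f_W(W_Z)+U\rfloor$ sweeps an integer interval of length at most $t/f_W(W_Z)\le t/\eta_Z$, hence at most $t/\eta_Z+2$ distinct values.
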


% %%%%%%%%%%%%%%%%%%%%%%%%%%%%%%%%%%%%%%%%%%%%%%%%%%%%%%%%%%%%
%%% 4 - Aggregated mechanisms %%%
\section{AGGREGATE MECHANISMS} \label{sec:agg_mec}

\subsection{Homomorphic Mechanisms}

For an individual \QD{} mechanism, the descriptions $M_1,\ldots,M_n$ sent from the users must all be available to the server. The descriptions cannot be first aggregated into a single number before decoding. We study aggregate \QD{} mechanisms where such ``intermediate aggregation'' of descriptions is possible.

\begin{definition}(Homomorphic) An aggregate \QD{} mechanism is \emph{homomorphic} if there exists functions $\mathscr{D}:\mathcal{M}\times\mathcal{S}\times\mathcal{T}\to\mathbb{R}^d$
(called the \emph{homomorphic decoding function})
%and $\tilde{\mathscr{D}} : \mathbb{R}^d \times\mathcal{T} \to \mathbb{R}^d$ (the \emph{postprocess function})
such that the overall decoding function is \vspace{-0.2cm}
\begin{equation} \label{eq:hom}
\overline{\mathscr{D}}(m_{1},s_{1},\ldots,m_{n},s_{n},t)=\frac{1}{n}\mathscr{D}\left(\sum_{i=1}^{n}m_{i},\,\sum_{i=1}^{n}s_{i},\,t\right)    
\end{equation}
and for all $t$, $\mathscr{D}(\cdot,\cdot,t):\mathcal{M}\times\mathcal{S}\to\mathbb{R}^d$
is a homomorphism, i.e., $\mathscr{D}(m,s,t)+\mathscr{D}(m',s',t)=\mathscr{D}(m+m',s+s',t)$ for all $m,m',s,s'$. 
\end{definition}

Here we require $\mathcal{M}$ and $\mathcal{S}$ to be abelian groups (e.g., $\mathcal{M} = \mathbb{Z}^d$, $\mathcal{S}=\mathbb{R}^d$) so we can perform addition over $\mathcal{M}$ and $\mathcal{S}$. For a homomorphic scheme, the server only requires the common randomness $(S_{i})_{i},T$ and
the sum of the descriptions $\sum_{i=1}^{n}M_{i}$, not the individual
descriptions $M_{1},\ldots,M_{n}$. 

In federated learning, homomorphic mechanisms have several
advantages. First, the descriptions $M_{1},\ldots,M_{n}$ can be passed
from the clients to the server through a network for sum computation \citep{ramamoorthy2013communicating,rai2012network,qu2021decentralized,rizk2021graph}, resulting in a reduction of communication cost.
A simple method is to have each node in the network add all incoming
signals and pass the sum to the next node. The internal nodes in the
network do not need to access $(S_{i})_{i},T$. Second, homomorphic
mechanisms are compatible with SecAgg \citep{bonawitz2017practical} and other additive homomorphic cryptosystems. We can perform
SecAgg on $M_{1},\ldots,M_{n}$ so that the server can only know $\sum_{i=1}^{n}M_{i}$, but not the individual $M_{i}$'s so as to preserve privacy. 

The layered quantizers of the previous section do not produce homomorphic mechanisms. The reason is that the quantization step sizes of the $n$ users are randomly drawn and different. Thus, the quantized data $M_i$'s are at different scales and cannot be added together. A simple and naive way to obtain an homomorphic mechanism is to rely on subtractive dithering (Example \ref{ex:sub_dither}) with a fixed step size $w$ for each user. 
\vspace{-0.1cm}
\subsection{Irwin-Hall Mechanism}\label{subsec:irwinhall}

When applying subtractive dithering for each of the $n$ users, the resultant noise of the individual \QD{} mechanism follows a scaled Irwin-Hall distribution. More precisely, consider $S=(S_{1},\ldots,S_{n})$ to be an \textit{i.i.d.} sequence of $\mathcal{U}(-1/2,1/2)$ random variables and $T=0$ to be degenerate. The encoding function is $M_i = \mathscr{E}(x_i,S_i)=\lceil x_i/w+S_i \rfloor$ where $w:=2\sigma\sqrt{3n}$, and the decoding function is $Y =w( \sum_i M_i - \sum_i S_i)$. The noise is a scaled Irwin-Hall distribution $\mathrm{IH}(n,0,\sigma^{2})$, where $\mathrm{IH}(n,\mu,\sigma^{2})$
denotes the distribution of $n^{-1}\sum_{i=1}^{n}Z_{i}+\mu$ with
$Z_{1},\ldots,Z_{n}\stackrel{iid}{\sim}\mathcal{U}(-\sigma\sqrt{3n},\,\sigma\sqrt{3n})$.
Note that $\mathrm{IH}(n,0,\sigma^{2})$ has mean $0$ and variance
$\sigma^{2}$, and approximates the Gaussian distribution $\mathcal{N}(0,\sigma^{2})$ when $n$ is large. We call this the \emph{Irwin-Hall mechanism}. 

While this mechanism is simple and homomorphic (the decoding function only depends on $\sum_i M_i$ and $\sum_i S_i$), an obvious downside is that the resultant noise distribution is the Irwin-Hall distribution, not the Gaussian
distribution. The Irwin-Hall distribution itself is not a privacy-preserving
noise for $(\varepsilon,\delta)$-differential privacy nor R\'enyi privacy.
%Nevertheless, it is possible to ensure R\'enyi privacy by adding a ???? noise on top of the Irwin-Hall noise (see Section ????). 
Moreover, specific FL applications such as stochastic Langevin dynamics \citep{welling2011bayesian,vono2022qlsd} or randomized smoothing \citep{duchi_randomized_2012,scaman_optimal_2018} specifically require a Gaussian distribution. This motivates the development of advanced aggregate AINQ mechanisms. 

\subsection{Aggregate $Q$ Mechanism}
\label{subsec:aggregate_mechanism}

Let $P=\mathrm{IH}(n,0,\sigma^{2})$ be the Irwin-Hall distribution. We have seen in Section~\ref{subsec:irwinhall} that the Irwin-Hall mechanism produces a noise with distribution $P$.
The idea is now to decompose the desired noise distribution $Q$ (e.g. Gaussian) into a mixture of shifted and scaled versions of $P$ in the form ``$aP + b$'', use the global common randomness $T$ to select the shifting and scaling factors according to the mixture probabilities, and perform the Irwin-Hall mechanism with the input and output shifted and scaled. In this section, we construct a homomorphic aggregate \QD{} mechanism with a noise distribution $Q$, called the \emph{aggregate $Q$ mechanism}, using this strategy. 

\begin{definition}(Mixture set)\label{def:mix_set}
For probability distributions $P,Q$, denote by $\Pi_{A,B}(P,Q)$ the set of joint probability distributions $\pi_{A,B}$ of the random variables $A\in\mathbb{R}$ and $B\in\mathbb{R}$ such that if $(A,B)\sim \pi_{A,B}$ is independent of $Z\sim P$, then $AZ+B\sim Q$.
\end{definition}

\begin{definition}(Aggregate $Q$ mechanism) \label{def:decomp_mec}
Let $S_{1},\ldots,S_{n}\stackrel{iid}{\sim}\mathcal{U}(-1/2,1/2)$ and $T=(A,B) \sim \pi_{A,B} \in \Pi_{A,B}(P,Q)$. The \emph{aggregate $Q$ mechanism} is defined by $w:=2\sigma\sqrt{3n}$ and
\begin{align*}
&\mathscr{E}(x,s,a,b):=\left\lceil x/(a w)+s \right\rfloor, \\
&\overline{\mathscr{D}}((m_i)_i,(s_i)_i,a,b):= \frac{a w}{n}\left(\sum_{i=1}^n m_i- \sum_{i=1}^n s_i\right) + b.
\end{align*}
\end{definition}
Basically, we generate $A,B$ randomly, and then run the Irwin-Hall mechanism scaled by $A$ and shifted by $B$. The resultant noise distribution is $Q$ as precised in the next Proposition.
\begin{proposition}\label{prop:decomp_ainq}
 The aggregate mechanism of Def.\ref{def:decomp_mec} satisfies the \QD{} property with noise distribution $Q$, and is homomorphic.\footnote{Technically, $\overline{\mathscr{D}}$ is not in the form (\ref{eq:hom}) due to the ``$+b$'' term, though it can be absorbed into $s_i$. Let $b'_i = b/n$, and treat $(s_i,b'_i)$ as the common randomness between client $i$ and the server. We then have $\overline{\mathscr{D}}((m_i)_i,(s_i,b'_i)_i,a)= \frac{a w}{n}\left(\sum_{i=1}^n m_i- \sum_{i=1}^n (s_i-b'_i)\right)$, which is in the form (\ref{eq:hom}) by taking $\mathscr{D}(m,s,b',a)=aw(m-s+b')$.}
\end{proposition}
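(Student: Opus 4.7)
The plan has two parts: verifying the AINQ property, and then the homomorphic structure.

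For the AINQ property, my approach is to condition on the global shared randomness $T=(A,B)=(a,b)$ and reduce the scheme to a rescaled Irwin--Hall mechanism. Conditioned on $(a,b)$, each per-user encoding $M_i = \lceil x_i/(aw) + S_i \rfloor$ is exactly the subtractive dithering of Example~\ref{ex:sub_dither} applied to $x_i$ with step size $aw$; hence $U_i := aw(M_i - S_i) - x_i$ is $\mathcal{U}(-aw/2, aw/2)$ independent of $x_i$, and the $U_i$ are mutually independent because the $S_i$ are i.i.d. Rewriting the decoder,
\begin{equation*}
Y = \frac{aw}{n}\sum_{i=1}^n (M_i - S_i) + b = \frac{1}{n}\sum_{i=1}^n x_i + \frac{1}{n}\sum_{i=1}^n U_i + b,
\end{equation*}
so $Y - \frac{1}{n}\sum_i x_i = aZ + b$ where $Z := \frac{1}{na}\sum_i U_i$ is the sample mean of $n$ i.i.d.\ $\mathcal{U}(-\sigma\sqrt{3n},\sigma\sqrt{3n})$ variables and therefore $Z \sim P = \mathrm{IH}(n,0,\sigma^2)$.

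The key observation is that this conditional distribution of $Z$ is the \emph{same} for every value of $a$, so unconditionally $Z \sim P$ and $Z$ is independent of $(A,B)$ (which acts only through rescaling and shifting in the decoder). Marginalising over $T$, the overall error is $Y - \frac{1}{n}\sum_i x_i = AZ + B$ with $Z \sim P$ independent of $(A,B) \sim \pi_{A,B} \in \Pi_{A,B}(P,Q)$. By the defining property of $\Pi_{A,B}(P,Q)$ in Definition~\ref{def:mix_set}, this random variable has distribution exactly $Q$, which establishes the AINQ property.

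For the homomorphic structure, the only obstruction to writing $\overline{\mathscr{D}}$ in the form of Eq.~\eqref{eq:hom} is the additive shift $B$. Following the footnote, I would spread $B$ evenly among clients: set $b'_i := B/n$ and take the per-client shared randomness to be $\tilde{S}_i := (S_i, b'_i)$, treating $T := A$ as the remaining global randomness. Defining
\begin{equation*}
\mathscr{D}(m, (s, b'), a) := aw(m-s) + nb',
\end{equation*}
which is manifestly a homomorphism in $(m, (s, b'))$ for any fixed $a$ (addition being coordinatewise), a direct computation gives
\begin{equation*}
\tfrac{1}{n}\mathscr{D}\!\left(\textstyle\sum_i m_i,\,(\sum_i s_i,\sum_i b'_i),\,a\right) = \tfrac{aw}{n}\!\left(\textstyle\sum_i m_i - \sum_i s_i\right) + \sum_i b'_i = \overline{\mathscr{D}},
\end{equation*}
since $\sum_i b'_i = b$. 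This matches the required form~\eqref{eq:hom}.

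The main conceptual step is the conditional reduction to the Irwin--Hall mechanism together with the observation that the resulting noise $Z$ is distributed as $P$ \emph{independently} of the random scaling $A$; once this is in place the AINQ identity is an immediate consequence of the definition of $\Pi_{A,B}(P,Q)$. The homomorphism is a routine rearrangement, but it requires the minor trick of splitting the constant shift $B$ into per-client components $b'_i$ so that it fits inside the additive structure demanded by Eq.~\eqref{eq:hom}.
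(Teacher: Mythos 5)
Your proof is correct and follows the same route as the paper: condition on $(A,B)$, identify the conditional error as $aZ+b$ with $Z\sim\mathrm{IH}(n,0,\sigma^2)$ by reducing to the Irwin--Hall mechanism with step size $aw$, then marginalise via the definition of $\Pi_{A,B}(P,Q)$, and split the shift $B$ into per-client pieces $b'_i=B/n$ to exhibit the homomorphic form. A small bonus: your choice $\mathscr{D}(m,(s,b'),a)=aw(m-s)+nb'$ is the one that exactly reproduces $\overline{\mathscr{D}}$, whereas the paper's footnote writes $\mathscr{D}(m,(s,b'),a)=aw(m-s+b')$, which after averaging produces a shift of $\tfrac{aw}{n}b$ rather than $b$; your version fixes that slip.
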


\subsection{Aggregate Gaussian Mechanism}

We now study the case where $Q$ is the Gaussian distribution, and describe the construction of $\pi_{A,B}$ which decomposes $Q$ into a mixture of Irwin-Hall distributions. We call this the \emph{aggregate Gaussian mechanism}.
% We now describe the decomposition of the Gaussian distribution $Q$ into a mixture of shifted and scaled versions of the Irwin-Hall distribution $P$ used in the aggregate Gaussian mechanism. % as well as the strategy for bounding $h_{\mathrm{M}}(Q\Vert P)$.

\textbf{Step 1.} First, we study how to decompose $\mathcal{U}(-1/2,1/2)$ (instead of $Q$) into a mixture of shifted and scaled versions of the Irwin-Hall distribution $P$ with a pdf $f$. Assume $f$ is appropriately scaled so its support is $[-1/2,1/2]$. For $-1/2\le x\le 1/2$, decompose $\mathcal{U}(-1/2,1/2)$ into the mixture $(1/f(0))f(x) + (1-1/f(0))\varphi(x)$ where $\varphi(x)=(f(0)-f(x))/(f(0)-1)$ is a bimodal distribution with modes $\{-1/2;1/2\}$, and can be decomposed into uniform distributions using the strategy in Section~\ref{sec:layered}. These uniform distributions can be recursively decomposed into a mixture of a shifted/scaled $f$ and other uniform distributions, and so on. %This gives a lower bound on $h_{\mathrm{M}}(\mathcal{U}(-1/2,1/2)\Vert P)$. 
See Figure \ref{fig:agg_gauss_fig}, and Algorithm $\textsc{DecomposeUnif}$ in Appendix \ref{subsec:decomp_uni}.

\textbf{Step 2.} To decompose the Gaussian distribution $Q$ with a pdf $g$, we first decompose $g$ into the mixture $g(x)=\lambda f(x)+(1-\lambda)\psi(x)$ where $\lambda$ is as large as possible such that $\psi$ is still unimodal. A practical choice is $\lambda=\inf_{x>0} \mathrm{d}g(x)/\mathrm{d}f(x)$ if $n\ge3$, and $\lambda=0$ if $n\le2$. We then decompose $\psi$ into a mixture of uniform distributions, using the strategy in Section~\ref{sec:layered}, and decompose those uniform distributions using the aforementioned strategy. Refer to Figure \ref{fig:agg_gauss_fig}, and Algorithm $\textsc{Decompose}$ in Appendix \ref{subsec:decomp}. %The proof of the following bound on $h_{\mathrm{M}}(Q\Vert P)$ is given in Appendix~\ref{subsec:pf_hM_symm}.

\begin{figure}[h!]
\begin{centering}
\includegraphics[scale=0.6]{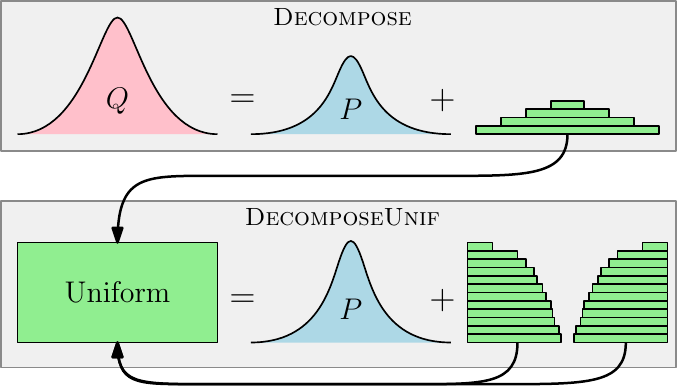}
\par\end{centering}
\caption{\label{fig:agg_gauss_fig} Decomposition of $Q$ into a mixture of scaled Irwin-Hall $P$. Algorithm $\textsc{Decompose}$ decomposes $Q$ into a mixture of $P$ and a unimodal distribution, where the latter is then decomposed into a mixture of uniform distributions and passed to $\textsc{DecomposeUnif}$. }%$\textsc{DecomposeUnif}$ decomposes a uniform distribution into a mixture of $P$ and two unimodal distributions, which are decomposed into mixtures of uniform distributions and passed back to $\textsc{DecomposeUnif}$ recursively.}
\vspace{-0.5cm}
\end{figure}

\subsection{Theoretical Analysis}
We now study the communication cost. Assuming the data $x_{i}$ is bounded by $|x_{i}|\le t/2$ for $i=1,\ldots,n$, we have $|\mathscr{E}(x_{i},S_{i},A,B)| \le\left\lceil t/(2w|A|)\right\rceil$. Therefore, $M_{i}$ can be encoded into $\lceil\log(t/(w|A|)+3)\rceil$
bits conditional on $A$. The expected amount of communication per client is upper-bounded by $\mathbb{E}(\lceil \log(t/(w|A|) +3)\rceil)$ which is approximately equal to $\mathbb{E}[-\log|A|]+\log(t/(2\sigma\sqrt{3n}))$ bits.
%\begin{align*}
%  \mathbb{E}\left(\left\lceil \log\left(\frac{t}{w|A|}+3\right)\right\rceil \right)
 % \approx\mathbb{E}[-\log|A|]+\log\frac{t}{w}\\
% \approx\mathbb{E}[-\log|A|]+\log\frac{t}{2\sigma\sqrt{3n}}\;\text{bits}.
%\end{align*}
Hence, we can construct an aggregate \QD{} mechanism with a small amount of communication if we have a small $\mathbb{E}[-\log|A|]$. To study this quantity, we introduce a notion called relative mixture entropy.

\begin{definition}
[Relative mixture entropy] Given probability distributions $P,Q$
over $\mathbb{R}$, the \emph{relative mixture entropy} is $h_{\mathrm{M}}(Q\Vert P):=\sup_{\Pi_{A,B}(P,Q)}\mathbb{E}_{\Pi_{A,B}}[\log|A|]$ where the supremum is over $\Pi_{A,B}(P,Q)$.\footnote{Take $\log0=-\infty$, so $\mathbb{E}[\log|A|]=-\infty$ if
$\mathbb{P}(A=0)>0$.}
\end{definition}

Relative mixture entropy has several desirable properties similar to the differential entropy (see Appendix~\ref{app:relmixent}). In particular, $h_{\mathrm{M}}(Q\Vert P)\le h(Q)-h(P)$ can be upper-bounded in terms of the difference of differential entropies. We have the following bound on the communication cost in terms of $h_{\mathrm{M}}(Q\Vert P)$.

\begin{theorem}
\label{thm:irwin_bound}(Complexity) Let $P=\mathrm{IH}(n,0,\sigma^{2})$ be an
 Irwin-Hall distribution. Assume $|x_{i}|\le t/2$ for $i=1,\ldots,n$.
There exists an aggregate \QD{} mechanism for simulating a noise distribution $Q$, with an expected amount of communication per client upper-bounded by 
\[
-h_{\mathrm{M}}(Q\Vert P)+\log\frac{t}{2\sigma\sqrt{3n}}+\frac{6\sigma\sqrt{3n}\log e}{t}\cdot\frac{\mathbb{E}_{Z\sim Q}[|Z|]}{\mathbb{E}_{Z\sim P}[|Z|]}+1.
\]
\end{theorem}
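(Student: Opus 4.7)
I would analyze the aggregate $Q$ mechanism of Definition~\ref{def:decomp_mec} applied with a joint law $\pi_{A,B}\in\Pi_{A,B}(P,Q)$ whose $\mathbb{E}[\log|A|]$ approaches $h_{\mathrm{M}}(Q\Vert P)$. Proposition~\ref{prop:decomp_ainq} already gives the \QD{} property with target $Q$, so only the communication cost needs to be bounded.

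The first step is to bound the support of the client's message $M_i=\mathscr{E}(x_i,S_i,A,B)=\lceil x_i/(Aw)+s_i\rfloor$ conditionally on $A$. The constraint $|x_i|\le t/2$ together with $s_i\in(-1/2,1/2)$ forces $M_i$ to take at most $t/(w|A|)+3$ distinct integer values, so a fixed-length code on top of the global randomness $T=(A,B)$ encodes $M_i$ in at most $\lceil\log(t/(w|A|)+3)\rceil$ bits per client.

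Taking expectations, I use $\lceil y\rceil\le y+1$ and the splitting $\log(t/(w|A|)+3)=\log(t/w)-\log|A|+\log(1+3w|A|/t)$ to bound the expected length by $\log(t/w)-\mathbb{E}[\log|A|]+\mathbb{E}[\log(1+3w|A|/t)]+1$. A one-line application of $\log(1+x)\le x\log e$ reduces the middle term to $(3w\log e/t)\cdot\mathbb{E}[|A|]$. What remains is a bound on $\mathbb{E}[|A|]$ that is uniform in $\pi_{A,B}$, so the optimization over $\pi_{A,B}$ only has to handle $\mathbb{E}[\log|A|]$.

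This uniform bound is the crux of the argument. Writing $Z=AZ_P+B\sim Q$ with $Z_P\sim P$ independent of $(A,B)$, and using that $P=\mathrm{IH}(n,0,\sigma^2)$ is symmetric about $0$, the mean absolute deviation $c\mapsto\mathbb{E}[|Z_P+c|]$ attains its minimum at $c=0$; conditioning on $(A,B)$ then gives $\mathbb{E}[|Z|\mid A,B]=|A|\,\mathbb{E}[|Z_P+B/A|]\ge|A|\,\mathbb{E}[|Z_P|]$, so $\mathbb{E}[|A|]\le\mathbb{E}_{Z\sim Q}[|Z|]/\mathbb{E}_{Z\sim P}[|Z|]$ for every feasible $\pi_{A,B}$. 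Substituting $w=2\sigma\sqrt{3n}$ and taking the supremum over $\pi_{A,B}$ to replace $-\mathbb{E}[\log|A|]$ by $-h_{\mathrm{M}}(Q\Vert P)$ then yields the stated bound. The main obstacle is precisely this decoupling step: without the symmetry of $P$, one could not separate the two terms and the supremum defining $h_{\mathrm{M}}$ would remain entangled with $\mathbb{E}[|A|]$.
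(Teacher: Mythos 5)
Your proof is correct and follows essentially the same route as the paper's: bound $|M_i|$ to get a fixed-length code of $\lceil\log(t/(w|A|)+3)\rceil$ bits, split the logarithm and use $\log(1+x)\le x\log e$, then observe that $\mathbb{E}[|AZ+B|]\ge\mathbb{E}[|A|]\,\mathbb{E}[|Z|]$ because the median of $Z\sim P$ is zero, so that $\mathbb{E}[|A|]\le\mathbb{E}_{Z\sim Q}[|Z|]/\mathbb{E}_{Z\sim P}[|Z|]$ uniformly over $\Pi_{A,B}(P,Q)$, and finally take the supremum in $\mathbb{E}[\log|A|]$. Your framing of the median/symmetry observation as the key decoupling step is a fair reading of why the argument works.
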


To give an upper bound on the expected communication cost, it remains to give a lower bound on $h_{\mathrm{M}}(Q\Vert P)$.
\begin{theorem}
\label{thm:hM_symm}(Lower bound) For two distributions $P,Q$ with pdfs $f,g$, respectively, that are unimodal, differentiable and symmetric around $0$ with  $L:=2\sup\{x:f(x)>0\}<\infty$ and $\lambda:=\inf_{x>0}\mathrm{d}g(x)/\mathrm{d}f(x)$, we have
\begin{align*}
  h_{\mathrm{M}}(Q\Vert P) \ge-(1 \! - \! \lambda) \! \left(Lf(0)+\log\frac{eL(g(0)-\lambda f(0))}{2(1-\lambda)}\right)\!.
\end{align*}
\end{theorem}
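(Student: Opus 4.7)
The plan is to construct an explicit $\pi_{A,B}\in\Pi_{A,B}(P,Q)$ via the \textsc{Decompose}/\textsc{DecomposeUnif} procedure sketched in Section~\ref{subsec:aggregate_mechanism}, and then lower-bound $\mathbb{E}_{\pi_{A,B}}[\log|A|]$. First I split $g=\lambda f+(1-\lambda)\psi$ with $\psi:=(g-\lambda f)/(1-\lambda)$; the choice $\lambda=\inf_{x>0}g'(x)/f'(x)$ guarantees $\psi'(x)\le 0$ for $x>0$, so $\psi$ is a symmetric unimodal pdf with $\psi(0)=(g(0)-\lambda f(0))/(1-\lambda)$. The $\lambda f$ component is handled by placing a point mass $\lambda$ at $(A,B)=(1,0)$, contributing $\lambda\log 1=0$ to $\mathbb{E}[\log|A|]$.

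For the $(1-\lambda)\psi$ component, I apply the direct layered quantizer of Section~\ref{sec:layered} to $\psi$: this writes $\psi$ as a mixture of symmetric uniforms $\mathcal{U}(-b_\psi^{+}(U),b_\psi^{+}(U))$ with $U$ drawn from density $f_U(u)=2b_\psi^{+}(u)$ on $[0,\psi(0)]$ and random width $W_0:=2b_\psi^{+}(U)$. Each such uniform of width $w$ is then decomposed recursively: with probability $1/c$, where $c:=Lf(0)\ge 1$, output a scaled $P$ component with $A=w/L$ and $B=0$; otherwise, scale the bimodal residue $\varphi(y):=(c-Lf(Ly))/(c-1)$ by $w$, apply the layered decomposition to the scaled $\varphi$ to extract a uniform of width $wV$ for $V\in(0,1/2]$, and recurse. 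The resulting scale has the product form $A=(W_0/L)V_1\cdots V_{N-1}$, where $N$ is geometric with parameter $1/c$ and the $V_i$ are i.i.d.\ copies of $V$, all independent. Wald's identity then yields
\begin{equation*}
\mathbb{E}[\log|A|]=\mathbb{E}[\log W_0]-\log L+(c-1)\mathbb{E}[\log V].
\end{equation*}

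Since $W_0=f_U(U)$ with $U$ supported on $[0,\psi(0)]$, the maximum-entropy bound gives $\mathbb{E}[\log W_0]=-h(U)\ge-\log\psi(0)$. The main obstacle is a tight lower bound on $(c-1)\mathbb{E}[\log V]$. For this, I would change variables to obtain $f_V(v)=2v\,\varphi'(1/2-v)$ on $[0,1/2]$ and integrate by parts (boundary terms vanish because $s\log s\to 0$ and $\varphi(0)=0$), producing
\begin{equation*}
\mathbb{E}[\log V]=\log e+\frac{2}{c-1}\int_0^{1/2}\bigl(c-\tilde f(t)\bigr)\log t\,\mathrm{d}t,
\end{equation*}
where $\tilde f(t):=Lf(L/2-Lt)$ is increasing from $0$ to $c$ on $[0,1/2]$ with $\int_0^{1/2}\tilde f=1/2$. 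A rearrangement argument (writing $\tilde f=\tilde f^{*}+h$ for the extremal $\tilde f^{*}=c\,\mathbf{1}[t\ge (c-1)/(2c)]$ and using that $\int h=0$ with $h\le 0$ before $(c-1)/(2c)$ and $h\ge 0$ after) shows that $\int \tilde f\log t\,\mathrm{d}t$ is maximal at $\tilde f^{*}$. Plugging $\tilde f^{*}$ in yields, after simplification, $\mathbb{E}[\log V]\ge\log((c-1)/(2c))$.

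Finally, the elementary inequality $\ln(1-x)\ge -x/(1-x)$ applied at $x=1/c$ gives $(c-1)\log(1-1/c)\ge -\log e$, so $(c-1)\mathbb{E}[\log V]\ge -(c-1)-\log e$. Combining the bounds gives $\mathbb{E}[\log|A|]\ge -(c-1)-\log(eL\psi(0))$, and hence $h_{\mathrm{M}}(Q\Vert P)\ge(1-\lambda)\,\mathbb{E}[\log|A|]$. Substituting $c=Lf(0)$ and $\psi(0)=(g(0)-\lambda f(0))/(1-\lambda)$, and using $\log 2=1$ to rewrite $(c-1)+\log(eL\psi(0))$ as $c+\log(eL\psi(0)/2)$, gives exactly the claimed lower bound.
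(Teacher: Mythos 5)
Your proposal is correct (up to a small sign typo) and takes a genuinely different route from the paper. The paper never touches the recursion of $\textsc{DecomposeUnif}$ directly: it first proves Lemma~\ref{lem:f_to_unif}, that $h_{\mathrm{M}}(\mathcal{U}(-1/2,1/2)\Vert f)\ge -Lf^*-\log(eL/2)$, by writing $\mathcal{U}(-1/2,1/2)=\frac{1}{f^*}f+\nu_1 g_1+\nu_2 g_2$, applying the concavity and chain-rule properties of $h_{\mathrm{M}}$ (Proposition~\ref{prop:mixture_prop}) together with Corollary~\ref{cor:unif_to_f}, and solving a fixed-point inequality in $h_{\mathrm{M}}(\mathrm{U}\Vert f)$. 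The theorem is then two more lines: concavity plus chain rule again, feeding in Lemma~\ref{lem:f_to_unif} and Corollary~\ref{cor:unif_to_f} on the residual $\psi$. You instead build an explicit coupling $\pi_{A,B}$, compute $\log|A|=\log W_0-\log L+\sum_{i=1}^{N-1}\log V_i$ for the geometric recursion of $\textsc{DecomposeUnif}$, and apply Wald's identity (your $(c-1)\mathbb{E}[\log V]$ term is exactly the contribution that the paper handles abstractly via the fixed-point identity $h_{\mathrm{M}}(\mathrm{U}\Vert f)\ge (1-1/f^*)h_{\mathrm{M}}(\mathrm{U}\Vert f)+\cdots$). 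Where the paper again invokes Corollary~\ref{cor:unif_to_f} to bound $h_{\mathrm{M}}(g_i\Vert\mathrm{U})\ge\log\nu_i$, you compute the density of $V$ explicitly, integrate by parts, and use a bathtub/rearrangement argument against the step function $\tilde f^*=c\,\mathbf{1}[t\ge(c-1)/(2c)]$. The two routes land on precisely the same numerical bound, which is reassuring; your version makes the role of the algorithm and the geometric stopping time visible at the price of a more hands-on extremal calculation, while the paper's version is shorter once the structural properties of $h_{\mathrm{M}}$ are in place and avoids computing the law of $V$.

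Two minor points to fix: (i) the signs of $h=\tilde f-\tilde f^*$ are swapped in your write-up — with your convention $\tilde f(t)=Lf(L/2-Lt)$ nondecreasing, one has $h\ge 0$ on $[0,(c-1)/(2c))$ and $h\le 0$ on $[(c-1)/(2c),1/2]$, and since $\log t\le\log\frac{c-1}{2c}$ on the first piece and $\ge$ on the second, $\int h\log t\,\mathrm{d}t\le 0$ as needed; (ii) Wald's identity needs $\mathbb{E}[|\log V|]<\infty$, which holds when $\tilde f'$ is bounded near the endpoint (true for the Irwin-Hall family with $n\ge 2$; for $n=1$ one has $c=1$, $N\equiv 1$ and the term vanishes), so it is worth a one-line remark, though it does not affect the application in Theorem~\ref{thm:irwin_bound}.
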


\begin{figure}[h!]
\centering
\subfigure[$t=64$]{
  \includegraphics[scale=0.335]{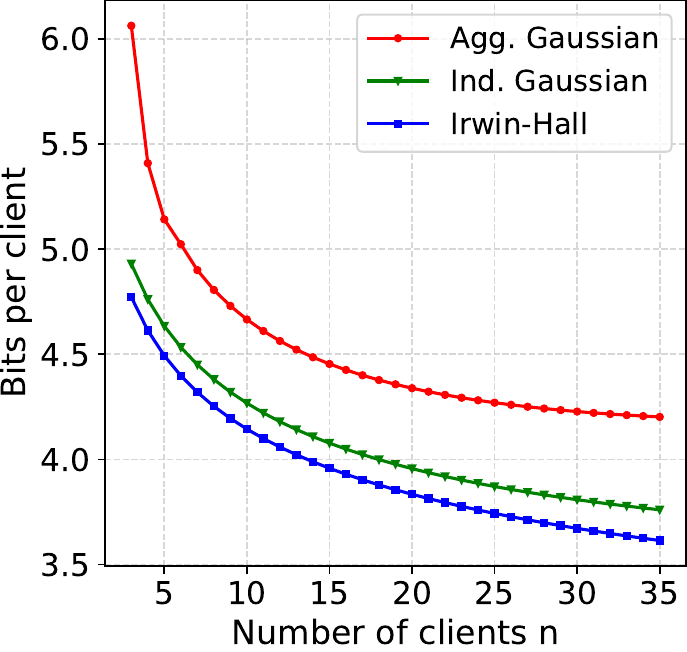}\label{fig:FL_agg_64}}
  \subfigure[$t=2048$]{
  \includegraphics[scale=0.335]{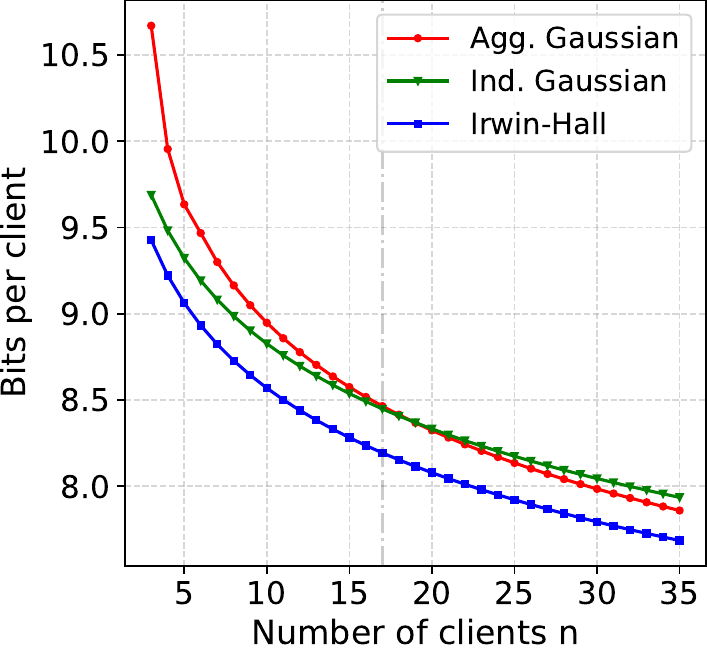}\label{fig:FL_agg_2048}}
\caption{\label{fig:agg_compare}Comparison between the aggregated Gaussian, individual Gaussian (direct) and Irwin-Hall mechanisms, $\sigma = 1$, (a) $x_{i}\in[-2^5;2^5]$, (b) $x_{i}\in[-2^{10};2^{10}]$.} 
\vspace{-0.5cm}
\end{figure}

We can then combine Theorems \ref{thm:irwin_bound}, \ref{thm:hM_symm} to give a bound on the communication cost of the aggregate Gaussian mechanism. 
% In particular, if we take $Q=\mathcal{N}(0,\sigma^{2})$ to be the Gaussian distribution, and $P=\mathrm{IH}(n,0,\sigma^{2})$ to be the Irwin-Hall distribution, we obtain the aggregated Gaussian mechanism. 
The proofs of the theorems are given in Appendix \ref{subsec:pf_irwin_bound} and \ref{subsec:pf_hM_symm}.
The pseudocode is given in Algorithms $\textsc{Encode}$ and $\textsc{Decode}$ in Appendix \ref{subsec:agg_gauss_algo}. 
Figure~\ref{fig:agg_compare} compares the communication cost per client of the aggregate Gaussian mechanism, individual Gaussian mechanism (direct layered quantizer) and Irwin-Hall mechanism where the bounds are computed using Theorem \ref{thm:irwin_bound} and Theorem \ref{thm:hM_symm}. We can see that the aggregate Gaussian mechanism can have a smaller communication cost than the individual Gaussian mechanism for a large number of clients, though not as small as the Irwin-Hall mechanism. We also remark that aggregate Gaussian is homomorphic (unlike individual Gaussian), and has a noise distribution that is exactly Gaussian (unlike the Irwin-Hall mechanism).

%%%%%%%%%%%%%%%%%%%%%%%%%%%%%%%%%%%%%%%%%%%%%%%%%%%%%%%%%%%%
%%% 5 - DP applications %%%
\section{COMPRESSION AND SUBSAMPLING FOR  PRIVACY}\label{sec:5_dP_with_comp}

\subsection{Trusted server with subsampling strategy}

Subsampling \citep{li2012sampling,balle2018privacy} is a technique for enhancing privacy, where only a subset of clients or their data is selected for transmission. It can be leveraged to reduce communication cost while improving DP guarantees. Recently, \citep{chen2023privacy} introduced coordinate-wise subsampling to derive DP schemes with optimal communication utility tradeoffs. We now describe how this scheme can be improved with an AINQ mechanism. 

We are interested in the \emph{subsampled individual Gaussian mechanism (SIGM)} with shifted layered quantizer. First consider the case in dimension $d=1$ with $n$ clients and sampling variables $B_1,\ldots ,B_n \sim \mathcal{B}(\gamma)$  ($\gamma \in [0,1]$), where $B_i=1$ if client $i$ is selected and $B_i=0$ otherwise. Let $\tilde{n}=\sum_i B_i$ be the number of selected clients and let $\mathscr{E},\mathscr{D}$ be the encoding and decoding functions in Def.\ref{def:shifted} for the noise distribution $\mathcal{N}(0,(\sigma \gamma n)^2)$. The mechanism works as follows: \textit{(1)} generate the global shared randomness $B_1,\ldots ,B_n \sim \mathcal{B}(\gamma)$; \textit{(2)} generate the shared randomness $S_i$ if $B_i=1$; \textit{(3)} client $i$ sends $M_i = \mathscr{E}(x_i\sqrt{\tilde{n}}, S_i)$ if $B_i=1$, or $M_i=0$ if $B_i=0$; and \textit{(4)} server outputs $Y = (\gamma n\sqrt{\tilde{n}})^{-1}\sum_{i:\,B_i=1} \mathscr{D}(M_i,S_i)$. It can be checked that $Y - (\gamma n)^{-1}\sum_{i: B_i=1} x_i \sim \mathcal{N}(0,\sigma^2)$. The proof, the generalization to $d$ dimensional data, and the algorithm are included in Appendix \ref{subsec:sigm_alg}. This mechanism achieves the same statistical behavior as CGSM \citep{chen2023privacy}, with the benefit that it directly ensures an exact Gaussian noise through quantization without the need to first incur a compression error independently of DP noise. Invoking \citep[Theorem 4.1]{chen2023privacy} and Prop.~\ref{prop:min_step2}, we have the following result.

\begin{proposition}\label{prop:dp_sigm} If $x_1,\ldots,x_n \in [-c,c]^d$, then SIGM is $(\eps,\delta)$-DP with a noise level of order
\begin{equation*}
        \sigma^2 = \Theta\bigg(\frac{c^{2}\ln(1/\delta)}{n^{2}\gamma^{2}}+\frac{c^{2}d(\ln(d/\delta)+\varepsilon)\ln(d/\delta)}{n^{2}\varepsilon^{2}}\bigg),
\end{equation*}
where the cost per client is 
$\bo\left(\gamma d \log(2 + c/(\sigma\sqrt{n \gamma}))\right)$, 
and the error $\expec [\norm{Y - n^{-1} \sum_i x_i}^2_2]$ is at most $\frac{dc^2}{n\gamma} + d\sigma^2$.
\end{proposition}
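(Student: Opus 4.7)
The plan is to reduce all three claims to invocations of previously established results: the DP bound follows from a distributional equivalence with CGSM, the communication bound follows from Proposition~\ref{prop:min_step2}, and the MSE is a direct variance computation.

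\textbf{Privacy via a distributional identity.} The first step is to show that the server's output $Y$ has exactly the same joint distribution with the inputs and subsampling variables as CGSM with rate $\gamma$ and noise $\mathcal{N}(0,\sigma^2 I_d)$. For each selected client $i$, the point-to-point AINQ property of the shifted layered quantizer designed for target noise $\mathcal{N}(0,(\sigma\gamma n)^2)$ gives $\mathscr{D}(M_i,S_i) = x_i\sqrt{\tilde n} + Z_i$ with $Z_i\sim\mathcal{N}(0,(\sigma\gamma n)^2)$ independent of $x_i$; moreover the $Z_i$'s are mutually independent since the $S_i$'s are. Summing over $\{i:B_i=1\}$ and dividing by $\gamma n\sqrt{\tilde n}$ gives
\begin{equation*}
Y = \frac{1}{\gamma n}\sum_{i:B_i=1} x_i + \frac{1}{\gamma n \sqrt{\tilde n}}\sum_{i:B_i=1} Z_i,
\end{equation*}
and the second term has per-coordinate variance $(\gamma n \sqrt{\tilde n})^{-2}\cdot\tilde n \cdot(\sigma\gamma n)^2 = \sigma^2$, independently of the realization of $\tilde n$. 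Thus $Y$ matches the output of CGSM in distribution, and \citep[Theorem~4.1]{chen2023privacy} yields the stated $(\varepsilon,\delta)$-DP guarantee with the prescribed $\sigma^2$.

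\textbf{Communication cost.} Applying Proposition~\ref{prop:min_step2} in the Gaussian case with effective noise scale $\sigma\gamma n$ and input support length $t=2c\sqrt{\tilde n}$ for $x_i\sqrt{\tilde n}\in[-c\sqrt{\tilde n},c\sqrt{\tilde n}]$ gives $|\supp(M_i)|\leq 2+c\sqrt{\tilde n}/(\sigma\gamma n\sqrt{\ln 4})$, so each coordinate requires at most $\lceil\log(2+c\sqrt{\tilde n}/(\sigma\gamma n\sqrt{\ln 4}))\rceil$ bits. A client transmits only when $B_i=1$ (probability $\gamma$), and since $x\mapsto\log(a+b\sqrt x)$ is concave, Jensen's inequality together with $\mathbb{E}[\tilde n]=\gamma n$ yields an expected per-coordinate cost of $\bo(\gamma\log(2+c/(\sigma\sqrt{n\gamma})))$. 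Multiplying by the $d$ coordinates gives the claimed $\bo(\gamma d\log(2+c/(\sigma\sqrt{n\gamma})))$.

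\textbf{Mean-squared error.} Using the identity from the first step and writing $Z\sim\mathcal{N}(0,\sigma^2 I_d)$ for the aggregated Gaussian noise,
\begin{equation*}
Y - \frac{1}{n}\sum_{i=1}^n x_i = \frac{1}{n}\sum_{i=1}^n\left(\frac{B_i}{\gamma}-1\right)\! x_i + Z,
\end{equation*}
where $Z$ is independent of the $B_i$'s. The Gaussian term contributes $d\sigma^2$. The coefficients $B_i/\gamma-1$ are independent, mean-zero, with variance $(1-\gamma)/\gamma\leq 1/\gamma$, so by independence and $\|x_i\|_2^2\leq dc^2$ the subsampling term contributes $n^{-2}(1-\gamma)\gamma^{-1}\sum_i\|x_i\|_2^2\leq dc^2/(n\gamma)$. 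Adding these yields the bound.

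\textbf{Main obstacle.} The main subtlety is handling the random selection count $\tilde n$ cleanly: the encoder scales the input by $\sqrt{\tilde n}$ and the decoder divides by $\gamma n\sqrt{\tilde n}$, so both the distributional identity with CGSM and the support-size bound are naturally conditioned on $\tilde n$, and one must handle the edge event $\tilde n=0$ by convention (e.g.\ output $0$). It is precisely this square-root scaling that cancels $\tilde n$ from the post-aggregation noise variance, producing the exact Gaussian output required for the privacy reduction and giving a communication cost that concentrates around $\log(c/(\sigma\sqrt{n\gamma}))$ rather than the worst-case $\log(c/(\sigma\gamma\sqrt n))$.
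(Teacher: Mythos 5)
Your proposal is correct and follows essentially the same route as the paper: condition on the subsampling variables, use the AINQ property of the shifted layered quantizer to reduce $Y$ in distribution to CGSM with $\mathcal{N}(0,\sigma^2 I_d)$ noise, invoke Theorem~4.1 of \citet{chen2023privacy} for the privacy bound, and apply Proposition~\ref{prop:min_step2} with input length $t=2c\sqrt{\tilde n}$ for the communication cost. You additionally spell out the Jensen step for the expected bit count and write out the MSE variance decomposition, which the paper leaves implicit, but the argument is the same.
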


\begin{remark}(Flattening and geometry)
To extend this result for data bounded in $\ell_2$ norm, one may rely on flattening techniques to convert the $\ell_2$ geometry into an $\ell_{\infty}$ geometry and obtain tighter utility analysis. Using Kashin's representation as in \citet{chen2023privacy} or some Walsh-Hadamard/Fourier transform as in \citet{kairouz2021distributed}, the SIGM mechanism enjoys the optimal utility bound of order $\bo(d/(n^2 \varepsilon^2))$. Observe that these flattening schemes require $\bo(d^2)$ or $\bo(d \log d)$ depending on the operation.
\end{remark}

\begin{table}[t]
\begin{centering}
{%\scriptsize
\resizebox{\columnwidth}{!}{
\begin{tabular}{|l|c|c|c|c|}
\hline 
Quantized Aggregation Scheme & $\!\!\!\!\!\!\!\!$ $\begin{array}{c}
\text{Homo-}\\
\text{morphic}
\end{array}$ $\!\!\!\!\!\!\!\!$ & $\!\!\!\!\!\!\!$$\begin{array}{c}
\text{Gaussian}\\
\text{noise}
\end{array}$$\!\!\!\!\!\!$ & $\!\!\!\!\!\!$
$\begin{array}{c}
\text{Rényi}\\
\text{DP}
\end{array}$
$\!\!\!\!\!\!$ & $\!\!\!\!\!\!$$\begin{array}{c}
\text{Fixed}\\
\text{length}
\end{array}$$\!\!\!\!\!\!$\tabularnewline
% \hline 
% $\!\!$Simple quantization (Ex. \ref{ex:sub_dither})$\!\!$ & \greenV & \xcross & \xcross & \greenV\tabularnewline
\hline 
$\!\!$Individual - Direct (Def.\ref{def:direct})$\!\!$ & \xcross & \greenV & \greenV & \xcross\tabularnewline
\hline 
$\!\!$Individual - Shifted (Def.\ref{def:shifted})$\!\!$ & \xcross & \greenV & \greenV & \greenV\tabularnewline
\hline 
$\!\!$Irwin-Hall (Section \ref{subsec:irwinhall})$\!\!$ & \greenV & \xcross & \xcross & \greenV\tabularnewline
\hline 
$\!\!$Aggregate Gaussian (Def.\ref{def:decomp_mec})$\!\!$ & \greenV & \greenV & \greenV & \xcross\tabularnewline
\hline 
$\!\!$Subsampled ind. Gaussian (Sec. \ref{sec:5_dP_with_comp}) $\!\!$ & \xcross & \greenV & \greenV & \greenV\tabularnewline
% \hline 
% $\!\!$Subsampled agg. Gaussian$\!\!$ & \greenV & \xcross & \greenV & \xcross\tabularnewline
\hline 
\end{tabular}}}
\par\end{centering}
\caption{Comparison of aggregate \QD{} mechanisms -- whether they are homomorphic, can produce a Gaussian noise, achieve Rényi DP, and have a fixed number of communication bits used. \vspace{-0.1cm}}
\end{table}

\textbf{Numerical comparison.} %For illustration purposes, 
We empirically evaluate our distributed mean estimation scheme SIGM with the method CSGM in a similar framework as in \citet{chen2023privacy}. The parameter configuration is: number of clients $n \in \{1000;2000\}$, dimension $d\in \{100;500\}$, probability of information accidentally being leaked $\delta=10^{-5}$,  privacy budget $\varepsilon \in [0.5;4]$ and subsampling parameter $\gamma \in \{0.3;0.5;1.0\}$. 
%Figure \ref{fig:trusted} display the mean squared errors (MSE) obtained over $10$ independent runs and reveal the similar behaviors of the two methods.
Figure \ref{fig:trusted} displays the mean squared errors (MSE) obtained over $100$ independent runs. The number of bits used by CSGM is kept equal to the number of bits used by SIGM. For the same $\varepsilon$, $\gamma$ and the number of bits used, SIGM allows a smaller MSE compared to the CSGM. 

 \begin{figure}[t]
  \centering
  \subfigure[$n=1000, \ d=100$]{
  \includegraphics[scale=0.33]{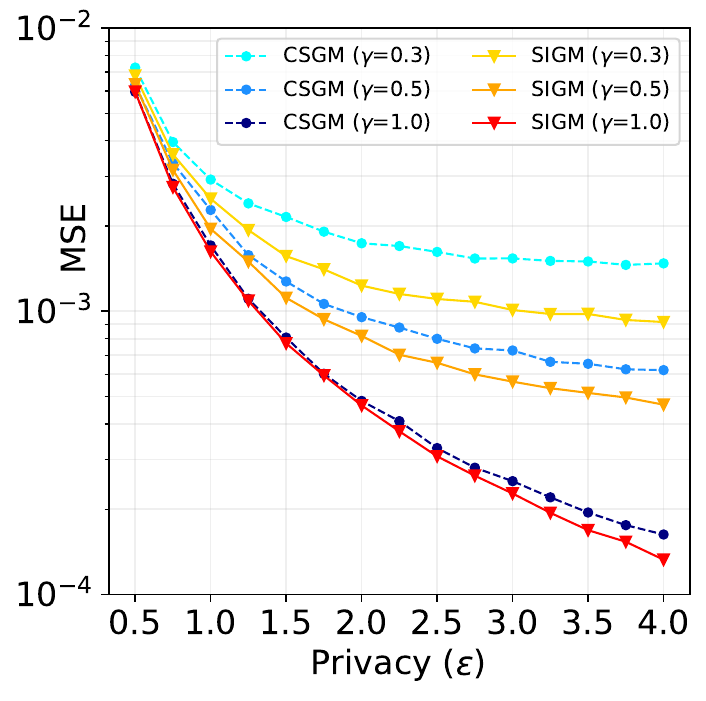}\label{fig:trusted1}}
  \subfigure[$n=2000, \ d=500$]{
  \includegraphics[scale=0.33]{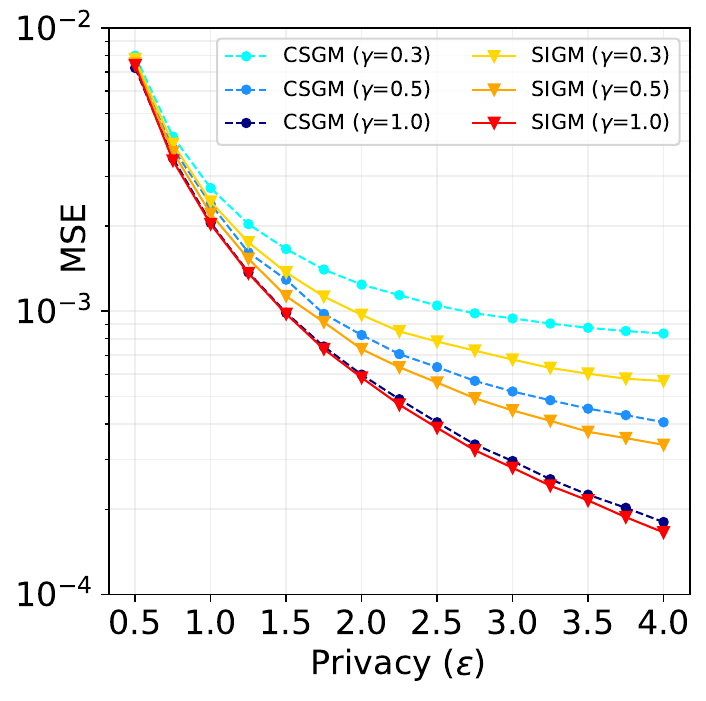}\label{fig:trusted2}}
    \vspace{-0.4cm}
    \caption{ Mean Squared Error for CSGM and SIGM.}
    \label{fig:trusted}
    \vspace{-0.3cm}
\end{figure}

\subsection{Less-trusted with SecAgg}

The Distributed Discrete Gaussian (DDG) mechanism \citep{kairouz2021distributed} can leverage SecAgg to achieve differential privacy guarantees against the server, which is a stronger setting than that of the \textit{less-trusted server}. However, in practical implementation, it often requires a much higher number of bits per coordinate.

%For comparison, %on the one hand, 
We adapt the experiments from \citet{kairouz2021distributed} which show the utility of the DDG mechanism with different number of bits against the standard Gaussian mechanism. On the other hand, using Elias gamma coding, we calculate the number of bits needed for the \textit{aggregate Gaussian mechanism} and the \textit{individual mechanism} using \textit{shifted layered quantizer} to match a Gaussian mechanism. The results\footnote{The MSE curves for the DDG experiments are obtained with the original code of \citet{kairouz2021distributed}.} of Figure \ref{fig:ddg} are obtained over $30$ runs with $n=500$ and $d=75$ and highlight the great performance of aggregate Gaussian. While DDG offers stronger privacy guarantees, \textit{i.e.}, DP against the server, it comes at a heavy cost in terms of bits in comparison to aggregate Gaussian, which is also compatible with SecAgg. 

\begin{figure}[h!]
    \centering
    \vspace{-0.2cm}
    \includegraphics[scale=0.345]{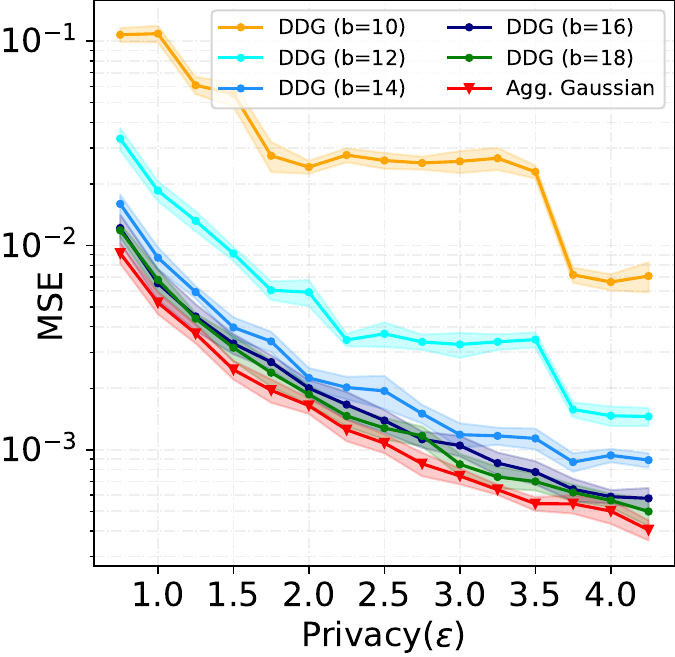}
    \includegraphics[scale=0.34]{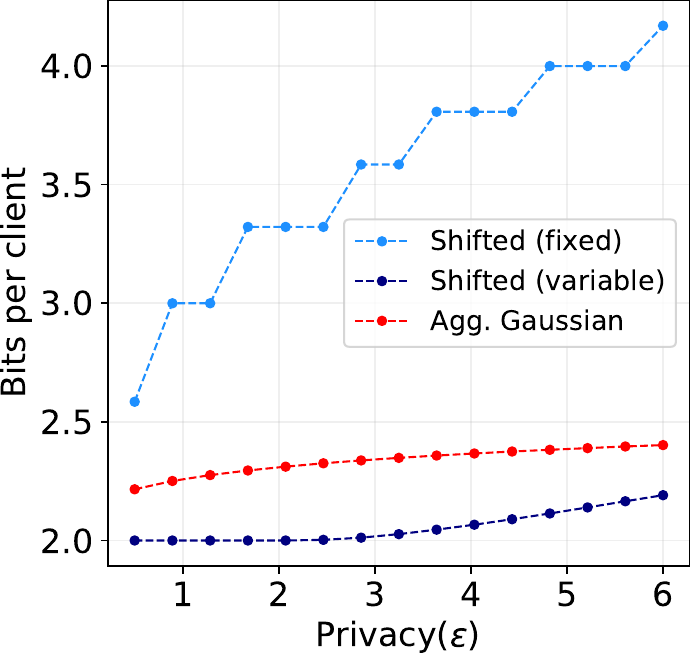}
    \vspace{-0.3cm}
    \caption{MSE (left) and bits per client (right) against $\varepsilon$. The DDG mechanism can require up to $b=18$ bits to match the privacy-utility tradeoff of aggregate Gaussian, where the latter only requires $\le 2.5$ bits on average. We also plot the bits per client for shifted layered quantizer (using a fixed or variable-length code) on the right figure for comparison (we remark that shifted layered quantizer is incompatible with SecAgg).}
    \label{fig:ddg}
    \vspace{-0.3cm}
\end{figure}

%%%%%%%%%%%%%%%%%%%%%%%%%%%%%%%%%%%%%%%%%%%%%%%%%%%%%%%%%%%%

\newpage

\section*{Acknowledgements}

The work of M. Hegazy, R. Leluc  and A. Dieuleveut was partially supported by Hi!Paris FLAG project. The work of Cheuk Ting Li was partially supported by an ECS grant from the Research Grants Council of the Hong Kong Special Administrative Region, China [Project No.: CUHK 24205621].

\bibliographystyle{plainnat}
\bibliography{main.bbl}

%%%% TO DO LIST %%%%%%
% \include{todo}
\onecolumn
\newpage
\begin{center}
\Large
Appendix: \\
Compression with Exact Error Distribution for Federated Learning
\end{center}

Appendix \ref{app:additional_agg} is dedicated to additional results on aggregated mechanisms with a focus on the relative mixture entropy and the details of the algorithms used in Section \ref{sec:agg_mec}. Appendix \ref{app:proofs} gathers all the proofs of the theoretical results. Appendix \ref{app:supp_numerical} presents additional details and results for the numerical experiments. Appendix \ref{app:compression_smoothing} presents how the proposed quantizers with exact error distribution can be applied to obtain optimal algorithms for non-smooth distributed optimization problems.

\appendix
\startcontents[sections]
\printcontents[sections]{l}{1}{\setcounter{tocdepth}{2}}

\newpage
\section{Additional results on aggregated mechanisms} \label{app:additional_agg}

\subsection{Properties of relative mixture entropy}\label{app:relmixent}

\begin{proposition}
\label{prop:mixture_prop}The relative mixture entropy satisfies:
\begin{enumerate}
\item (Shifting and scaling) Let $X,Y$ be random variables, and denote
the distribution of $X$ as $P_{X}$. For constants $a,c\neq0$ and
$b,d\in\mathbb{R}$,
\[
h_{\mathrm{M}}(P_{cY+d}\Vert P_{aX+b})=h_{\mathrm{M}}(P_{Y}\Vert P_{X})+\log\frac{|c|}{|a|}.
\]
\item (Concavity in $Q$) For a fixed $P$, $h_{\mathrm{M}}(Q\Vert P)$
is concave in $Q$.
\item (Chain rule) For distributions $P,Q,R$, 
\[
h_{\mathrm{M}}(R\Vert P)\ge h_{\mathrm{M}}(R\Vert Q)+h_{\mathrm{M}}(Q\Vert P).
\]
\item (Bound via differential entropy) If $P,Q$ are continuous distributions,
\[
h_{\mathrm{M}}(Q\Vert P)\le h(Q)-h(P),
\]
where $h(P)$ is the differential entropy of $P$. %This justifies
%calling $h_{\mathrm{M}}(Q\Vert P)$ a ``relative'' kind of ``entropy''.
\end{enumerate}
\end{proposition}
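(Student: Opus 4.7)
The plan is to prove each of the four claims directly from the definition $h_{\mathrm{M}}(Q\Vert P)=\sup_{\pi \in \Pi_{A,B}(P,Q)} \mathbb{E}_\pi[\log|A|]$ by constructing or manipulating admissible couplings $(A,B)$. In each case the key observation is that the admissibility constraint $AZ+B\sim Q$ (with $Z\sim P$ independent of $(A,B)$) is preserved under natural operations: affine change of variables (for item 1), mixing (item 2), composition (item 3), and conditioning/entropy bounding (item 4).

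For item 1, the approach is to exhibit a bijection between $\Pi_{A,B}(P_X,P_Y)$ and $\Pi_{A,B}(P_{aX+b},P_{cY+d})$. Given $(A,B)$ with $AX+B\sim Y$, I set $A':=cA/a$ and $B':=cB+d-cAb/a$, so that $A'(aX+b)+B'=cAX+cB+d\sim cY+d$. Since $\log|A'|=\log|A|+\log(|c|/|a|)$, taking expectations and suprema over both directions of the bijection yields the identity. For item 2, writing $Q=\lambda Q_1+(1-\lambda)Q_2$ and picking $\pi_i\in\Pi_{A,B}(P,Q_i)$, I note that the mixture $\pi:=\lambda\pi_1+(1-\lambda)\pi_2$ lies in $\Pi_{A,B}(P,Q)$ because if $(A,B)\sim\pi$ is independent of $Z\sim P$ then $AZ+B$ has distribution $\lambda Q_1+(1-\lambda)Q_2=Q$. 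Linearity of expectation gives $\mathbb{E}_\pi[\log|A|]=\lambda\mathbb{E}_{\pi_1}[\log|A|]+(1-\lambda)\mathbb{E}_{\pi_2}[\log|A|]$, and taking the supremum over $\pi_1,\pi_2$ yields concavity.

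For item 3, I compose two couplings. Take $(A_2,B_2)\sim\pi_2\in\Pi_{A,B}(P,Q)$ and $(A_1,B_1)\sim\pi_1\in\Pi_{A,B}(Q,R)$ as independent, with $W\sim P$ independent of both. Then $A_2W+B_2\sim Q$ independently of $(A_1,B_1)$, so $A_1(A_2W+B_2)+B_1=(A_1A_2)W+(A_1B_2+B_1)\sim R$. The pair $(A_1A_2,\,A_1B_2+B_1)$ thus lies in $\Pi_{A,B}(P,R)$, and since $A_1,A_2$ are independent, $\mathbb{E}[\log|A_1A_2|]=\mathbb{E}[\log|A_1|]+\mathbb{E}[\log|A_2|]$. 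Taking suprema over $\pi_1$ and $\pi_2$ separately yields the chain rule. For item 4, fix any $\pi\in\Pi_{A,B}(P,Q)$ with $(A,B)\sim\pi$ independent of $Z\sim P$, so $AZ+B\sim Q$. Conditioning reduces differential entropy, giving
\[
h(Q)=h(AZ+B)\ge h(AZ+B\mid A,B).
\]
The conditional distribution of $AZ+B$ given $(A,B)=(a,b)$ is that of $aZ+b$, with differential entropy $h(P)+\log|a|$ when $a\neq 0$. Therefore $h(AZ+B\mid A,B)=h(P)+\mathbb{E}[\log|A|]$, so $\mathbb{E}[\log|A|]\le h(Q)-h(P)$, and taking the supremum over $\pi$ yields the bound.

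The main subtlety I expect is in item 4, where the scalar differential-entropy scaling law $h(aZ+b)=h(Z)+\log|a|$ must be applied conditionally on $(A,B)$, and the degenerate case $\mathbb{P}(A=0)>0$ must be handled via the $\log 0=-\infty$ convention (which makes the inequality trivial). Items 1--3 are essentially structural manipulations of the coupling sets and should go through without technical difficulty; the only bookkeeping is to check measurability/independence so that the composed or mixed couplings are genuinely elements of $\Pi_{A,B}(\cdot,\cdot)$.
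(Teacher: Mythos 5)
Your proof is correct and follows essentially the same route as the paper's: it builds admissible couplings via reparametrization (item 1), mixing (item 2), composition with independent couplings (item 3), and the conditioning inequality for differential entropy (item 4). The only difference is cosmetic — the paper dismisses item 1 as "straightforward" whereas you spell out the bijection explicitly — so there is nothing substantive to add.
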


\begin{proof}
Property (1) is straightforward. To show concavity (2), consider $\lambda\in[0,1]$ and two distributions
$Q_{1},Q_{2}$, and consider $(A_{1},B_{1})$ independent of $(A_{2},B_{2})$
independent of $Z\sim P$ such that $A_{1}Z+B_{1}\sim Q_{1}$, $A_{2}Z+B_{2}\sim Q_{2}$.
Take $(A,B)=(A_{1},B_{1})$ with probability $\lambda$, and $(A,B)=(A_{2},B_{2})$
with probability $1-\lambda$. We have 
\begin{align*}
    AZ+B\sim\lambda Q_{1}+(1-\lambda)Q_{2}, \qquad \mathbb{E}[\log|A|]=\lambda\mathbb{E}[\log|A_{1}|]+(1-\lambda)\mathbb{E}[\log|A_{2}|].
\end{align*}
Taking supremum gives us the desired result. 
To show the chain rule (3), consider $(A_{1},B_{1})$ independent of $(A_{2},B_{2})$ independent
of $Z\sim P$ such that $Y=A_{1}Z+B_{1}\sim Q$, and $A_{2}Y+B_{2}\sim R$.
We have $A_{1}A_{2}Z+A_{2}B_{1}+B_{2}\sim R$, and $\mathbb{E}[\log|A_{1}A_{2}|]=\mathbb{E}[\log|A_{1}|]+\mathbb{E}[\log|A_{2}|]$.
Taking supremum gives us the desired result. To show the bound via differential entropy (4), if $(A,B)$ is independent of $Z\sim P$, then $AZ+B\sim Q$ and
\begin{align*}
h(Q)=h(AZ+B)\ge h(AZ+B\,|\,A,B)=h(Z)+\mathbb{E}[\log|A|].   
\end{align*}
\end{proof}

\subsection{Algorithm \textsc{DecomposeUnif}} \label{subsec:decomp_uni}

As an intermediate step, we study how we can simulate $\mathcal{U}(-1/2,1/2)$ using a noise with p.d.f. function $f$ that is unimodal, symmetric around $0$, and supported over $[-1/2,1/2]$ (e.g. a scaled version of Irwin-Hall). The idea is to recursively express uniform distributions as mixtures of shifted and scaled versions of $f$ and other uniform distributions, and repeat ad infinitum. 

We now describe the operation of decomposing $\mathcal{U}(-1/2,1/2)$ into a mixture of shifted and scaled versions of $f(x)$. For the sake of simplicity, assume $f(x)$ is unimodal, symmetric around $0$, and supported over $[-1/2,1/2]$ (e.g. a scaled version of Irwin-Hall). First, we can express the uniform distribution $\mathcal{U}(-1/2,1/2)$ as a mixture of $f(x)$ and other uniform distributions as follows. Generate $U\sim\mathcal{U}(-1/2,1/2)$ independent of $V\sim\mathcal{U}(0,1)$. If $V\le f(U)/f(0)$, then generate $X\sim f$. If $V>f(U)/f(0)$ and $U>0$, then take $S=f^{-1}(Vf(0))$ (where $f^{-1}(y):=\inf\{x\ge0:\,f(x)\le y\}$), and generate $X\sim\mathcal{U}(S,1/2)$. If $V>f(U)/f(0)$ and $U<0$, then take $S=f^{-1}(Vf(0))$, $X\sim\mathcal{U}(-1/2,-S)$. Then we can show that $X\sim\mathcal{U}(-1/2,1/2)$. We can then recursively express these uniform distributions in the mixture as mixtures of shifted and scaled versions of $f(x)$ and other uniform distributions, and repeat ad infinitum.

Algorithm \textsc{DecomposeUnif} takes $f$ and a random number generator $\mathfrak{P}$ as input (where we can invoke $\mathfrak{P}()$ to obtain a $\mathcal{U}(0,1)$ random number), and outputs two random variables $A,B$ such that if we generate $X\sim f$, then $AX+B\sim\mathcal{U}(-1/2,1/2)$.

\begin{algorithm}
\textbf{$\;\;\;\;$Input:} pdf $f$, random number generator $\mathfrak{P}$

\textbf{$\;\;\;\;$Output:} scale $a\in(0,\infty)$ and shift $b\in\mathbb{R}$

\smallskip{}

\begin{algorithmic}[1]

\State{$a\leftarrow1$, $b\leftarrow0$}

\While{True}

\State{$u\leftarrow\mathfrak{P}()-1/2$}

\State{$v\leftarrow\mathfrak{P}()$}

\If{ $v\le f(u)/f(0)$}

\State{\Return$(a,b)$}

\Else 

\State{$s\leftarrow f^{-1}(vf(0))$}

\State{$b\leftarrow b+a\cdot\mathrm{sgn}(u)\left(\frac{s+1/2}{2}\right)$}

\State{$a\leftarrow a(1/2-s)$}

\EndIf

\EndWhile

\end{algorithmic}

\caption{\label{alg:decomp_unif}$\textsc{DecomposeUnif}(f,\mathfrak{P})$}
\end{algorithm}

\subsection{Theoretical analysis of Algorithm $\textsc{DecomposeUnif}$}

We now analyze the value of $\mathbb{E}[\log A]$ given by Algorithm \textsc{DecomposeUnif}, which will give a bound on $h_{\mathrm{M}}(\mathcal{U}(-1/2,1/2)\Vert P)$.
We invoke the following result which follows from \citep[Theorem 1]{hegazy2022randomized}.
\begin{theorem}
For any probability density function $f$, we have $$h_{\mathrm{M}}(f\,\Vert\,\mathcal{U}(-1/2,1/2))\le h_{\mathrm{L}}(f)$$ and equality holds if $f$ is a unimodal distribution\footnote{A probability density function $f$ is unimodal if there exists $x_{0}\in\mathbb{R}$
such that $f(x)$ is nondecreasing for $x\le x_{0}$, and nonincreasing
for $x\ge x_{0}$.} with a finite mean, where $h_{\mathrm{L}}(f):=\int_{0}^{\infty}\lambda(L_{\tau}^{+}(f))\log\lambda(L_{\tau}^{+}(f))\mathrm{d}\tau$ is called the \emph{layered entropy} of $f$, where $L_{\tau}^{+}(f):=\{x\in\mathbb{R}:\,f(x)\ge\tau\}$ is the superlevel set, and $\lambda(L_{\tau}^{+}(f))$ is the Lebesgue
measure.
\end{theorem}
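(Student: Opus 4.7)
The plan is to handle achievability (equality for unimodal $f$) and the converse separately. Write $\ell_f(\tau) := \lambda(L_\tau^+(f))$; the layer-cake identity gives $\int_0^\infty \ell_f(\tau)\,d\tau = \int f = 1$, so $\tau \mapsto \ell_f(\tau)$ is itself a probability density on $[0,\infty)$.

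For achievability, I exhibit an explicit member of $\Pi_{A,B}(\mathcal{U}(-1/2,1/2), f)$. The layer-cake identity also yields the mixture representation
\[
f(x) \;=\; \int_0^\infty \ell_f(\tau) \cdot \frac{\mathbf{1}_{L_\tau^+(f)}(x)}{\ell_f(\tau)}\, d\tau.
\]
When $f$ is unimodal, $L_\tau^+(f)$ is an interval of length $\ell_f(\tau)$, so conditional on $\tau^\star \sim \ell_f(\tau)\,d\tau$ the inner factor is the uniform density on that interval. Setting $A = \ell_f(\tau^\star)$, letting $B$ be the midpoint of $L_{\tau^\star}^+(f)$, and taking $Z \sim \mathcal{U}(-1/2,1/2)$ independent gives $AZ+B \sim f$ together with
\[
\mathbb{E}[\log|A|] \;=\; \int_0^\infty \ell_f(\tau) \log \ell_f(\tau)\, d\tau \;=\; h_\mathrm{L}(f),
\]
hence $h_\mathrm{M}(f\,\Vert\,\mathcal{U}(-1/2,1/2)) \ge h_\mathrm{L}(f)$. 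Finiteness of the mean of $f$ is used to guarantee $h_\mathrm{L}(f)$ is well-defined (integrability of $\ell_f \log \ell_f$).

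For the upper bound, fix any $(A,B)$ (wlog $A>0$) independent of $Z \sim \mathcal{U}(-1/2,1/2)$ with $AZ+B \sim f$. This represents $f$ as a mixture $f = \int u_{a,b}\,d\pi(a,b)$ of uniform densities on intervals $I_{a,b} = [b-a/2, b+a/2]$, so $\mathbb{E}[\log A] = \int \log a\, d\pi$, and I must show this is at most $h_\mathrm{L}(f)$. My plan uses Lagrangian duality: the dual asks for $\mu:\mathbb{R}\to\mathbb{R}$ satisfying $a^{-1}\int_{I_{a,b}}\mu(x)\,dx \ge \log a$ for every feasible $(a,b)$, with objective $\int \mu f$. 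For symmetric unimodal $f$ the candidate $\mu(x) = \log\ell_f(f(x)) + \log e$ is feasible (on intervals straddling the origin this reduces to the convex inequality $u\log u + v \log v \ge 0$ on $u+v=2$, $u,v\ge0$) and satisfies $\int \mu f\,dx = h_\mathrm{L}(f)$ by integration by parts, matching achievability. To pass to arbitrary $f$, I invoke the symmetric decreasing rearrangement $f^\star$: since $\ell_{f^\star} \equiv \ell_f$, one has $h_\mathrm{L}(f) = h_\mathrm{L}(f^\star)$, and a transport argument showing $h_\mathrm{M}(f\,\Vert\,\mathcal{U}) \le h_\mathrm{M}(f^\star\,\Vert\,\mathcal{U})$ closes the chain.

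The main obstacle is the converse, specifically extending it beyond the symmetric unimodal case. The dual feasibility calculation and its matching value are essentially the same integration-by-parts computation used for achievability, and it goes through cleanly only when $\ell_f$ is smooth; for multimodal $f$, $\ell_f$ has jumps (where disjoint components merge into a single interval under rearrangement) and $\mu(x) = \log\ell_f(f(x)) + \log e$ ceases to be a tight dual certificate. So the argument really rests on the rearrangement reduction: any mixture decomposition of $f$ must be lifted to one of $f^\star$ with at least as large $\mathbb{E}[\log|A|]$. The intuition is that rearranging horizontally can only merge disconnected components at each level into a single interval, enlarging the admissible interval lengths; making this precise requires a careful coupling of the horizontal-slice structures of $f$ and $f^\star$ and is the main technical hurdle.
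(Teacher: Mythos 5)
The paper does not prove this theorem itself; it cites it from \citet{hegazy2022randomized} (Theorem~1 there). So I cannot compare your route to the paper's directly, but I can assess whether your argument is complete, and it is not.

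Your achievability direction is correct and clean: for unimodal $f$ the layer-cake representation gives a valid element of $\Pi_{A,B}(\mathcal{U}(-1/2,1/2),f)$ with $A=\ell_f(\tau^\star)$, $B$ the midpoint of the superlevel interval, and $\mathbb{E}[\log A]=\int_0^\infty \ell_f\log\ell_f\,d\tau=h_{\mathrm{L}}(f)$, yielding $h_{\mathrm{M}}(f\,\Vert\,\mathcal{U})\ge h_{\mathrm{L}}(f)$. That half is fine.

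The converse, which is what the theorem actually asserts for arbitrary $f$, is where the proposal has a genuine gap. Your plan is to prove $h_{\mathrm{M}}(f\,\Vert\,\mathcal{U})\le h_{\mathrm{L}}(f)$ for symmetric unimodal $f$ by LP duality with certificate $\mu(x)=\log\ell_f(f(x))+\log e$, and then reduce the general case to this one via $h_{\mathrm{M}}(f\,\Vert\,\mathcal{U})\le h_{\mathrm{M}}(f^\star\,\Vert\,\mathcal{U})$ where $f^\star$ is the symmetric decreasing rearrangement (using $\ell_{f^\star}=\ell_f$ and hence $h_{\mathrm{L}}(f^\star)=h_{\mathrm{L}}(f)$). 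Two problems. First, even the duality step is only sketched: you verify dual feasibility for intervals straddling the origin (where it reduces to $u\log u + v\log v\ge 0$ on $u+v=2$), but not for intervals lying entirely on one side of the mode; and the matching value $\int\mu f = h_{\mathrm{L}}(f)$ via integration by parts requires $\ell_f$ to be absolutely continuous, which you acknowledge is an assumption. Second, and more seriously, the rearrangement reduction $h_{\mathrm{M}}(f\,\Vert\,\mathcal{U})\le h_{\mathrm{M}}(f^\star\,\Vert\,\mathcal{U})$ is stated but not proved. You call it ``the main technical hurdle'' and offer only an intuition (rearranging merges disconnected level components into intervals, enlarging feasible lengths), but this intuition does not translate into a coupling: a mixture decomposition $f=\int u_{a,b}\,d\pi$ is not aligned with the level structure --- the intervals $I_{a,b}$ cut across many levels --- so the monotone transport map $(F^\star)^{-1}\circ F$ pushing $f$ to $f^\star$ does not send an interval $I_{a,b}$ to an interval, let alone to one of length $\ge |I_{a,b}|$. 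Without a proof that any feasible $\pi$ for $f$ can be lifted to a feasible $\pi^\star$ for $f^\star$ with $\mathbb{E}_{\pi^\star}[\log A]\ge \mathbb{E}_\pi[\log A]$, the entire converse (and with it the inequality part of the theorem) is unsupported. The duality-plus-rearrangement strategy is an interesting one and distinct from a direct entropy argument via $\mathbb{E}[\log A]=h(Y\mid A,B)$ (which is the natural information-theoretic route), but as written it is a plan with its central lemma missing, not a proof.
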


If $(X,Y)\sim\mathcal{U}\{(x,y):\,0\le y\le f(x)\}$, then $h_{\mathrm{L}}(f)=-h(Y)$
\citep{hegazy2022randomized}. Therefore, $h_{\mathrm{L}}(f)\ge-\log\sup_{x\in\mathbb{R}}f(x)$
is lower bounded by the differential min-entropy of $f$. We have
the following bound.

\begin{corollary}
\label{cor:unif_to_f} For any unimodal p.d.f. $f$ with a finite mean: $h_{\mathrm{M}}(f\,\Vert\,\mathcal{U}(-1/2,1/2)\ge-\log\sup_{x\in\mathbb{R}}f(x)$.
\end{corollary}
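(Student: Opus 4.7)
The plan is to chain together the theorem stated immediately above the corollary with a standard maximum-entropy bound on the marginal of the ``area-under-the-graph'' random variable.

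First I would invoke the equality case of the cited theorem: since $f$ is unimodal with finite mean, the upper bound $h_{\mathrm{M}}(f \,\Vert\, \mathcal{U}(-1/2,1/2)) \le h_{\mathrm{L}}(f)$ is tight, so it suffices to show $h_{\mathrm{L}}(f) \ge -\log \sup_{x} f(x)$. If $\sup_{x} f(x) = \infty$ the right-hand side is $-\infty$ and there is nothing to prove, so I would assume $M := \sup_{x} f(x) < \infty$.

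Next I would use the identity $h_{\mathrm{L}}(f) = -h(Y)$ (mentioned just before the corollary) where $(X,Y) \sim \mathcal{U}\{(x,y) : 0 \le y \le f(x)\}$. Concretely, the marginal density of $Y$ is $f_Y(y) = \lambda(L_y^+(f))$ for $y \in (0,M]$, which integrates to $\int_0^M \lambda(L_y^+(f))\,\mathrm{d}y = \int f(x)\,\mathrm{d}x = 1$ by the layer-cake formula. Hence $Y$ is a bona fide random variable supported on the interval $[0,M]$.

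The remaining step is the well-known maximum-entropy fact that among all distributions supported on an interval of length $M$, the uniform distribution maximizes differential entropy, giving $h(Y) \le \log M$. Substituting back yields
\[
h_{\mathrm{M}}(f \,\Vert\, \mathcal{U}(-1/2,1/2)) = h_{\mathrm{L}}(f) = -h(Y) \ge -\log M = -\log \sup_{x \in \mathbb{R}} f(x),
\]
which is exactly the claim. No step here looks genuinely hard: the only subtlety is confirming that the ``finite mean'' hypothesis in the theorem really does license the equality $h_{\mathrm{M}} = h_{\mathrm{L}}$ for every unimodal $f$ of interest, and checking that the bounded-support entropy bound still applies (trivially) when $M = \infty$.
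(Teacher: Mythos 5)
Your proof is correct and takes essentially the same route as the paper: the paper also invokes the equality $h_{\mathrm{M}}(f\,\Vert\,\mathcal{U}(-1/2,1/2)) = h_{\mathrm{L}}(f)$ for unimodal $f$ with finite mean, uses the identity $h_{\mathrm{L}}(f) = -h(Y)$, and bounds $h(Y)\le\log\sup_x f(x)$ by the maximum-entropy property of the uniform on a bounded interval. The extra details you supply (the layer-cake verification, the $M=\infty$ degeneracy) are consistent with the paper's argument, which states the chain more tersely.
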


We can now bound $h_{\mathrm{M}}(\mathcal{U}(-1/2,1/2)\Vert P)$. The following lemma will be useful in the proof of Theorem~\ref{thm:hM_symm}.

\begin{lemma}
\label{lem:f_to_unif}For any unimodal probability density function
$f$ with $f^{*}:=\sup_{x}f(x)$, with a bounded support $L:=\sup\{x:f(x)>0\}-\inf\{x:f(x)>0\}$,
we have $h_{\mathrm{M}}(\mathcal{U}(-1/2;1/2)\,\Vert\,f)\ge-Lf^{*}-\log(eL/2).$
\end{lemma}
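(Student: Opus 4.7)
The strategy is to use Algorithm~\ref{alg:decomp_unif} (\textsc{DecomposeUnif}) to build an explicit pair $(A,B)$ and carefully compute $\mathbb{E}[\log|A|]$.

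\textbf{Step 1 (Reduction and setup).} By the shifting/scaling property of relative mixture entropy (Proposition~\ref{prop:mixture_prop}(1)), we may replace $f$ with $\tilde f(x) := L f(Lx+\mu)$ for a suitable shift $\mu$; then $\tilde f$ is unimodal with support $[-1/2,1/2]$, $\tilde f^{*} = L f^{*}$, and
\[
h_M(\mathcal{U}(-1/2,1/2)\,\Vert\, f) = h_M(\mathcal{U}(-1/2,1/2)\,\Vert\, \tilde f) - \log L,
\]
so it suffices to show $h_M(\mathcal{U}(-1/2,1/2)\,\Vert\,\tilde f) \geq -\tilde f^{*} - \log(e/2)$. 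For clarity I focus on the symmetric case with mode at $0$; the general unimodal case follows by the same argument after splitting each rejection event into its left and right sub-interval branches. Running \textsc{DecomposeUnif} on $\tilde f$ produces $(A,B)$ with $AZ+B \sim \mathcal{U}(-1/2,1/2)$ when $Z\sim\tilde f$ is independent of $(A,B)$, so $h_M(\mathcal{U}(-1/2,1/2)\,\Vert\,\tilde f) \geq \mathbb{E}[\log A]$. Each round accepts with probability $p = 1/\tilde f^{*}$, so the number of rejections $N-1$ is geometric with mean $\tilde f^{*}-1$, and each rejection multiplies $A$ by $1/2-s$ with $s = \tilde f^{-1}(V\tilde f^{*})$, $V\sim\mathcal{U}(0,1)$. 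Linearity of expectation and a change of variable from $V$ to $s$ yield
\[
\mathbb{E}[\log A] = \int_0^{1/2}(1-2s)\log(1/2-s)\bigl(-\tilde f'(s)\bigr)\,ds.
\]

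\textbf{Step 2 (Integration by parts and the key inequality).} Setting $u(s) = (1-2s)\log(1/2-s)$ and $dv = -\tilde f'(s)\,ds$ gives $u'(s) = -2\log(1/2-s)-2\log e$ and boundary contribution $u(0)v(0)-u(1/2)v(1/2) = -\tilde f^{*}$. Combined with $\int_0^{1/2}\tilde f\,ds = 1/2$ (symmetry), this yields
\[
\mathbb{E}[\log A] = -\tilde f^{*} - \log e + 2\int_0^{1/2}\tilde f(s)\bigl(-\log(1/2-s)\bigr)\,ds.
\]
The key observation is that $G(s) := 2\tilde f(s)\mathbf{1}_{[0,1/2]}(s)$ is a bona fide probability density (total mass $1$ by symmetry), so the residual integral equals $\mathbb{E}_{S\sim G}[-\log(1/2-S)]$. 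Since $S \in [0,1/2]$ forces $1/2-S \leq 1/2$, we have $-\log(1/2-S) \geq 1$, hence
\[
\mathbb{E}[\log A] \geq -\tilde f^{*} - \log e + 1 = -\tilde f^{*} - \log(e/2),
\]
which combined with Step~1 gives $h_M(\mathcal{U}(-1/2,1/2)\,\Vert\, f) \geq -Lf^{*} - \log(eL/2)$.

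\textbf{Main obstacle.} The critical insight is recognizing the residual integral in Step~2 as an expectation of $-\log(1/2-S) \geq 1$ under a probability measure proportional to $\tilde f$ on $[0,1/2]$: without this observation one only obtains the weaker bound $\mathbb{E}[\log A] \geq -\tilde f^{*} - \log e$, which falls short of the target by exactly $\log 2$. A secondary technicality is extending from symmetric unimodal $\tilde f$ (as handled directly by \textsc{DecomposeUnif}) to general unimodal $\tilde f$; this is done by branching each rejection into its left and right uniform residuals and applying the same integration-by-parts argument on each branch, leading to an identical final bound.
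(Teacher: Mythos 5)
Your proof is correct in its essentials but takes a genuinely different route from the paper's. The paper's argument is indirect and algebraic: it writes $\mathcal{U}(-1/2,1/2)$ as a mixture $\frac{1}{f^*}f + \nu_1 g_1 + \nu_2 g_2$ with $g_1,g_2$ unimodal, applies concavity (Prop.~\ref{prop:mixture_prop}(2)) and the chain rule (Prop.~\ref{prop:mixture_prop}(3)) to derive the self-referential inequality $h_{\mathrm{M}}(\mathrm{U}\Vert f) \geq \nu_1 h_{\mathrm{M}}(g_1\Vert\mathrm{U}) + \nu_2 h_{\mathrm{M}}(g_2\Vert\mathrm{U}) + (1-1/f^*) h_{\mathrm{M}}(\mathrm{U}\Vert f)$, solves for $h_{\mathrm{M}}(\mathrm{U}\Vert f)$, then bounds $h_{\mathrm{M}}(g_i\Vert\mathrm{U}) \geq \log\nu_i$ via Corollary~\ref{cor:unif_to_f}, and finishes with the convexity of $x\log x$ plus the bound $(f^*-1)\log(1+\tfrac{1}{f^*-1})\leq\log e$. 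Yours is direct and computational: you treat the output of \textsc{DecomposeUnif} as an explicit coupling, compute $\expec[\log A]$ exactly via Wald's identity and the density $q(s)\propto(1-2s)(-\tilde f'(s))$ of the per-rejection scale factor, integrate by parts, and recognize the residual $2\int_0^{1/2}\tilde f(s)(-\log(1/2-s))\,ds$ as an expectation bounded below by $1$. Both yield the same constant $-f^*-\log(e/2)$; in both cases the $\log 2$ comes from the same place (in the paper, the averaging of $\nu_1,\nu_2$; in yours, the observation that $1/2-S\le 1/2$). What each buys: the paper's argument handles arbitrary unimodal $f$ with no differentiability assumption and no case analysis, because $g_1,g_2$ absorb the asymmetry automatically; your argument is more concrete and makes the mechanism behind the bound transparent (the geometric number of rejections and the scale-factor density), but as written it relies on $\tilde f$ being symmetric around $0$ and differentiable, and only sketches the extension to general unimodal $f$ via a left/right-branch decomposition. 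That extension does go through (one replaces the single integral with a sum over the two branches, with densities proportional to $B_+(w)$ and $B_-(w)$, and Wald still applies), but it is more bookkeeping than in the paper's version. A minor slip: in your integration-by-parts step the boundary contribution should read $u(1/2)v(1/2)-u(0)v(0)=-\tilde f^*$ rather than $u(0)v(0)-u(1/2)v(1/2)$, though your final formula for $\expec[\log A]$ is nonetheless correct.
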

\begin{proof}

Write $\mathrm{U}:=\mathrm{Unif}(-1/2,1/2)$. By the shifting and
scaling property in Proposition \ref{prop:mixture_prop}, we can assume
$L=1$ and $f$ is supported over $[-1/2,1/2]$ without loss of generality.
Assume the maximum of $f$ is attained at $x_{0}$. We can express
$\mathrm{Unif}(-1/2,1/2)$ as a mixture \vspace{-0.2cm}
\[
\frac{1}{f^{*}}f(x)+\nu_{1}g_{1}(x)+\nu_{2}g_{2}(x), \quad \text{ where } \nu_{1}:=\int_{-1/2}^{x_{0}}\left(1-\frac{f(x)}{f^{*}}\right)\mathrm{d}x, \;\;\; \nu_{2}:=\int_{x_{0}}^{1/2}\left(1-\frac{f(x)}{f^{*}}\right)\mathrm{d}x,
\]
and 
\[
g_{1}(x):=\frac{1-f(x)/f^{*}}{\nu_{1}}\;\;\text{for}\;-\frac{1}{2}\le x\le x_{0},\;\;\; g_{2}(x):=\frac{1-f(x)/f^{*}}{\nu_{2}}\;\;\text{for}\;x_{0}\le x\le\frac{1}{2}.
\]
Note that both $g_{1}(x)$ and $g_{2}(x)$ are unimodal, and $\nu_{1}+\nu_{2}=1-1/f^{*}$.
We have 
\begin{align*}
h_{\mathrm{M}}(\mathrm{U}\Vert f) & \stackrel{(a)}{\ge}\nu_{1}h_{\mathrm{M}}(g_{1}\Vert f)+\nu_{2}h_{\mathrm{M}}(g_{2}\Vert f)\\
 & \stackrel{(b)}{\ge}\nu_{1}\left(h_{\mathrm{M}}(g_{1}\Vert\mathrm{U})+h_{\mathrm{M}}(\mathrm{U}\Vert f)\right)+\nu_{2}\left(h_{\mathrm{M}}(g_{2}\Vert\mathrm{U})+h_{\mathrm{M}}(\mathrm{U}\Vert f)\right)\\
 & =\nu_{1}h_{\mathrm{M}}(g_{1}\Vert\mathrm{U})+\nu_{2}h_{\mathrm{M}}(g_{2}\Vert\mathrm{U})+\left(1-\frac{1}{f^{*}}\right)h_{\mathrm{M}}(\mathrm{U}\Vert f).
\end{align*}
where (a) is by concavity, and (b) is by the chain rule (Proposition
\ref{prop:mixture_prop}). Hence,
\begin{align*}
 h_{\mathrm{M}}(\mathrm{U}\Vert f) &\ge f^{*}\nu_{1}h_{\mathrm{M}}(g_{1}\Vert\mathrm{U})+f^{*}\nu_{2}h_{\mathrm{M}}(g_{2}\Vert\mathrm{U})\\
 & \stackrel{(c)}{\ge}f^{*}\nu_{1}\log\nu_{1}+f^{*}\nu_{2}\log\nu_{2}\\
 & \ge f^{*}2\left(\frac{1-1/f^{*}}{2}\right)\log\frac{1-1/f^{*}}{2}\\
 & =(f^{*}-1)\log\frac{f^{*}-1}{2f^{*}}\\
 & =-(f^{*}-1)-(f^{*}-1)\log\left(1+\frac{1}{f^{*}-1}\right)\\
 & \ge-(f^{*}-1)-(f^{*}-1)\frac{\log e}{f^{*}-1}\\
 & =-f^{*}-\log\frac{e}{2},
\end{align*}
where (c) is by Corollary \ref{cor:unif_to_f}.
\end{proof}

\subsection{Algorithm \textsc{Decompose}} \label{subsec:decomp}

The goal is to simulate an additive noise channel with a noise p.d.f. $g$ that is unimodal, symmetric around $0$ and differentiable knowing that we can simulate an additive noise channel with a noise p.d.f. $f$ with same properties. The naive and quite inefficient way is to decompose $g$ into a mixture of uniform distributions, and run the $\textsc{DecomposeUnif}$ algorithm on those uniform distributions. Instead, we first decompose $g$ into the mixture $$g(x)=\lambda f(x)+(1-\lambda)\psi(x)$$
where $\lambda$ is as large as possible such that $\psi$ is still unimodal. A practical choice is $\lambda=\inf_{x>0} \mathrm{d}g(x)/\mathrm{d}f(x)$ if $n\ge3$, and $\lambda=0$ if $n\le2$. We then decompose $\psi$ into a mixture of uniform distributions, and run $\textsc{DecomposeUnif}$. Algorithm $\textsc{Decompose}$ below computes the decomposition of $g$ into a mixture of shifted and scaled versions of $f$. It takes $f,g$ and a random number generator $\mathfrak{P}$ as input, and outputs two random variables $A,B$ such that if we generate $X\sim f$, then $AX+B\sim g$. 

\begin{algorithm}[h]
\textbf{$\;\;\;\;$Input:} pdfs $f,g$, random number generator $\mathfrak{P}$

\textbf{$\;\;\;\;$Output:} scale $a\in(0,\infty)$ and shift $b\in\mathbb{R}$

\smallskip{}

\begin{algorithmic}[1]

\State{$\lambda\leftarrow\inf_{x>0}\frac{\mathrm{d}g(x)}{\mathrm{d}f(x)}$}

\State{Sample $x\sim g$ using $\mathfrak{P}$}

\State{$v\leftarrow g(x)\cdot\mathfrak{P}()$}

\If{ $v>g(x)-\lambda f(x)$}

\State{\Return$(1,0)$}

\Else

\State{$s\leftarrow\sup\{x'\ge0:\,v\le g(x')-\lambda f(x')\}$}

\State{$L\leftarrow2\sup\{x:f(x)>0\}$}

\State{$\tilde{f}\leftarrow(x\mapsto f(x/L)/L)$ (support of $\tilde{f}$ is $[-1/2,1/2]$)}

\State{$(a,b)\leftarrow\textsc{DecomposeUnif}(\tilde{f},\mathfrak{P})$}

\State{\Return$(2as/L,\,2bs)$}

\EndIf

\end{algorithmic}

\caption{\label{alg:decomp}$\textsc{Decompose}(f,g,\mathfrak{P})$}
\end{algorithm}

\subsection{Algorithms for Aggregate Gaussian Mechanism}\label{subsec:agg_gauss_algo}

We describe the encoding and decoding algorithms for the aggregate Gaussian mechanism. We assume the server and all the clients share a common random seed, which they use to initialize their pseudorandom number generators (PRNG) $\mathfrak{P}$. Since each PRNG is initialized with the same seed, they are guaranteed to produce the same common randomness $(S_{i})_{i},A,B$ at the clients and server.

\begin{minipage}{0.49\textwidth}
\centering
\begin{algorithm}[H]
%\textbf{$\;\;\;\;$Input:} data $x\in\mathbb{R}$, number of clients
%$n$, client id $i$,
%\textbf{$\;\;\;\;$$\;\;\;\;$}standard deviation $\sigma$, RNG $\mathfrak{P}$
%\textbf{$\;\;\;\;$Output:} description $m\in\mathbb{Z}$
%\smallskip{}
\begin{algorithmic}[1]
\State{\textbf{Input:} data $x\in\mathbb{R}$, number of clients $n$, client id $i$, standard deviation $\sigma$, RNG $\mathfrak{P}$}
\State{\textbf{Output:} description $m\in\mathbb{Z}$}
\State{$(a,b)\leftarrow\textsc{Decompose}(\mathrm{IH}(n,0,1),\mathcal{N}(0,1),\mathfrak{P})$}
\For{$j=1,\ldots,n$}
\State{$s_{j}\leftarrow\mathfrak{P}()-1/2$}
\EndFor
\State{$w\leftarrow2\sigma\sqrt{3n}$}
\State{\Return$\left\lceil \frac{x}{a w}+s_{i}\right\rfloor $}
\end{algorithmic}
\caption{\label{alg:agg_enc}$\textsc{Encode}(x,n,i,\sigma,\mathfrak{P})$}
\end{algorithm}
\end{minipage}
\hfill
\begin{minipage}{0.49\textwidth}
\centering
\begin{algorithm}[H]
%\textbf{$\;\;\;\;$Input:} sum of descriptions $\Sigma_{m}=\sum_{i=1}^{n}m_{i}\in\mathbb{Z}$, number of clients $n$
%\textbf{$\;\;\;\;$$\;\;\;\;$}standard deviation $\sigma$, RNG $\mathfrak{P}$
%\textbf{$\;\;\;\;$Output:} estimated mean $y\in\mathbb{R}$
%\smallskip{}
\begin{algorithmic}[1]
\State{\textbf{Input:} sum of descriptions $\Sigma_{m}=\sum_{i=1}^{n}m_{i}\in\mathbb{Z}$, number of clients $n$, standard deviation $\sigma$, RNG $\mathfrak{P}$}
\State{\textbf{Output:} estimated mean $y\in\mathbb{R}$}
\State{$(a,b)\leftarrow\textsc{Decompose}(\mathrm{IH}(n,0,1),\mathcal{N}(0,1),\mathfrak{P})$}

\For{$j=1,\ldots,n$}

\State{$s_{j}\leftarrow\mathfrak{P}()-1/2$}

\EndFor

\State{$w\leftarrow2\sigma\sqrt{3n}$}

\State{\Return$\frac{a w}{n}\left(\Sigma_{m}-\sum_{i=1}^{n}s_{i}\right)+b\sigma$}

\end{algorithmic}
\caption{\label{alg:agg_dec}$\textsc{Decode}(\Sigma_{m},n,\sigma,\mathfrak{P})$}
\end{algorithm}
\end{minipage}

\subsection{Algorithm SIGM for Differential Privacy Applications}\label{subsec:sigm_alg}

Condition on any $B_1,\ldots,B_n$. Definition \ref{def:shifted} ensures that $\mathscr{D}(M_i,S_i)-x_i \sqrt{\tilde{n}} \sim \mathcal{N}(0,(\sigma \gamma n)^2)$. Therefore
%Y - (\gamma n)^{-1}\sum_{i: B_i=1} x_i$ follows $\mathcal{N}(0,(\sigma \gamma n)^2)$
\begin{align*}
Y - (\gamma n)^{-1}\sum_{i: B_i=1} x_i &= (\gamma n \sqrt{\tilde{n}})^{-1}\sum_{i:\,B_i=1}\Big( \mathscr{D}(M_i,S_i) - x_i \sqrt{\tilde{n}}\Big) \\
& \sim \mathcal{N}\Big(0,(\sigma \gamma n)^2 (\gamma n \sqrt{\tilde{n}})^{-2} \tilde{n} \Big) \; =\; \mathcal{N}(0,\sigma^2 ).
\end{align*}

To generalize to $d$ dimensional data $x_1,\ldots,x_n \in \mathbb{R}^d$, we can simply apply SIGM on each coordinate individually. Note that each coordinate $j\in [d]$ has $B_1(j),\ldots,B_n(j)$ sampled independently, indicating whether client $i=1,\ldots,n$ should send the coordinate $x_i(j)$, similar to the coordinate subsampling strategy in \citep{chen2023privacy}. The algorithm is given below.

\begin{algorithm}[h!]
    \label{alg:quant_subsampling}
  \caption{Subsampled Individual Gaussian Mechanism (SIGM)}
  \begin{algorithmic}
    \State \hspace{-0.2cm}  \textbf{Parameters:} Noise variance $\sigma^2>0$, Subsampling Bernoulli parameter $\gamma \in [0,1]$ \\
    \begin{center} \textbf{(Shared Randomness)} \end{center}
        \State \hspace{-0.2cm} $B_1,\ldots,B_n \in \ens{0,1}^d$ with $B_i(j)\sim \mathcal{B}(\gamma)$
        \State \hspace{-0.2cm} $S_1,\ldots, S_n:$ $S_i(1,j)\sim \Uni\open{0,1},S_i(2,j)\sim W_{\mathcal{N}(0,(\sigma \gamma n)^2)}$ 
        \State \hspace{-0.2cm} $\tilde{n}(j) \leftarrow \sum_{i=1}^n B_i(j)$ $\;\;\;$ for $j\in [d]$ \\
    \hrulefill
    \begin{center} \textbf{(Client side)} \end{center}
    \State \hspace{-0.2cm}  \textbf{input:} private $x_i \in \rset^d$ with $\|x_i\|_2 \leq c$ %, number of selected clients for each coordinate $\tilde{n} \in {\mathbb{Z}}^d$
    \State \hspace{-0.2cm}  \textbf{Encode:} For $j \in [d]$
    \State \hspace{-0.2cm}  \hspace{0.5cm} If $B_i(j)=1$ then $M_i(j) \leftarrow   \ecal({x}_i(j) \sqrt{\tilde{n}(j)}, S_i(\cdot \;, j))$
    \State \hspace{-0.2cm}  \hspace{0.5cm} else $M_i(j) \leftarrow   0$
    %\State \hspace{-0.2cm}  \hspace{0.5cm} If $B_i(j)=1$ then $\tilde x_i^{(j)}=   \ecal({x}_i^{(j)}, S_i(\cdot \;, j))$
    \State \hspace{-0.2cm}  \textbf{Send:} $M_i$ to the server \\
    \hrulefill
    \begin{center} \textbf{(Server side)} \end{center}
    \State \hspace{-0.2cm}  \textbf{input:} $M_1,\ldots,M_n \in \mathbb{Z}^d$
    \State \hspace{-0.2cm}  \textbf{Decode:} For $j \in [d]$
    \State \hspace{-0.2cm}  \hspace{0.5cm} ${\bar \mu}(j) \leftarrow (\gamma n\sqrt{\tilde{n}})^{-1}\sum_{i:\,B_i(j)=1} \mathscr{D}(M_i(j),S_i(\cdot,j))$
    \State \hspace{-0.2cm}  \textbf{Return:} $\bar \mu$
  \end{algorithmic}
\end{algorithm}

\newpage
\section{Proofs of technical results}
\label{app:proofs}

\subsection{Shifted Layered Quantizer satisfies AINQ}
\label{subsec:equiv_quantizers}

Let us show that the shifted layered quantizer yields the same distribution of error scheme as the direct layered quantizer (Definition \ref{def:direct}). Let $\Delta=Y-X$, then $\Delta|V\sim \Uni\open{b^-_Z(\bar{Z}-V),b^+_Z(V)}$. Denote $\mathcal{M} = \argmax f_Z(x)$ and $m=\text{median}(\mathcal{M})$. It follows that
\begin{align*}
    f_\Delta(x) &= \int_\rset \frac{\one{\ens{x\in\open{b^-_Z(\bar{Z}-v),b^+_Z(v)}}}}{W_Z(v)}W_Z(v) dv \\
     &\stackrel{(a)}{=}  \int_\rset \one{\ens{x\in\open{b^-_Z(\bar{Z}-v),m}}}dv+ \int_\rset \one{\ens{x\in\open{m,b_Z^+(v)}}} dv\\ 
     &\stackrel{(b)}{=} \int_\rset \one{\ens{x\in \open{b_Z^-(v),m}}}dv + \int_\rset \one{\ens{x\in\open{m,b_Z^+(v)}}} dv\\
     &= \int_\rset\one{\ens{x\in \open{b_Z^-(v), b_Z^+(v)}}} dv\\
     &\stackrel{(c)}{=}  \int_\rset\one{\ens{x\in \mathcal{L}_v(f_Z)}} dv = f_Z(x)
\end{align*}
where $(a)$ is due to the fact that $m\in \mathcal{L}_\tau(f_Z)$ for any $\tau$, $(b)$ by the change of variable $v=\bar{Z}-v$, and $(c)$ by unimodality of $f_Z$.

\subsection{Proof of Proposition \ref{prop:gap2}}

The conditional entropy $ \ent(M_s|S_s)$ may be decomposed as follows
\begin{align*}
    \ent(M_s|S_s)-\log (t) &= -\log(t)+\int_0^1\int_0^{\bar{Z}} \ent(M_s|S_s=(u,\tau))f_W(\tau) \diff \tau \diff s \\
    &\leq -\log(t) + \int_0^1\int_0^{\bar{Z}} \log\open{\frac{t}{f_W(\tau)}+2} f_W(\tau) \diff \tau\\
    &= h(W_Z)+\int_0^{\bar{Z}} \log\open{1+\frac{2f_W(\tau)}{t}}f_W(\tau) \diff \tau\\
    &\leq h(W_Z)+\frac{2\log(e)}{t} \int_0^{\bar{Z}}f_W(\tau)^2 \diff \tau.
\end{align*}

The last integral on the right-hand side may be upper bounded by considering the random variables $(Z,S) \sim \mathcal{U}\{(z,s)|s\in\closed{a(z),b(z)}\}$ with $\gamma\in \argmax f_Z$ and 
\begin{align*}
    a(x) &= \one\{x < \gamma\}(\bar{Z} - f_Z(x)) \\
    b(x) &= \one\{x < \gamma\} f_Z(x) + \one\{x \geq \gamma\} \bar{Z}
\end{align*}

%\begin{equation}
%    a(x)=\begin{cases}
%        0 &\text{if} x\geq \gamma.\\
%        \bar{Z} - f_Z(x) &\text{otherwise}
%    \end{cases} \;\; b(x) \begin{cases}
%        f_Z(x) &\text{if} x\geq \gamma.\\
%        \bar{Z} &\text{otherwise.}
%    \end{cases}
%\end{equation}
This coupling satisfies $S\sim W_Z$, $Z\sim f_Z$, $Z|S \sim \mathcal{U}[b_Z^-(\bar{Z}-S), b_Z^+(S)]$. Then if follows that 
\begin{align*}
    \int_0^{\bar{Z}} f_W(\tau)^2 \diff \tau 
    =& 4\int_0^{\bar{Z}} \expec\closed{\modu{Z-\frac{b_Z^+(\tau)+b_Z^-(\bar{Z}-\tau)}{2}} | S=\tau} f_W(\tau) \diff \tau.
\end{align*}
Since $Z$ is uniformly distributed over an interval conditional on $S=\tau$, we have that $\expec[|Z-m| | S=\tau]$ is minimized when $m$ is the midpoint of the interval so it holds that
\begin{align*}
\int_0^{\bar{Z}} f_W(\tau)^2 \diff \tau
 &\leq 4\int_0^{\bar{Z}} \expec\closed{\modu{Z-\text{median}(Z)} | S=\tau} f_W(\tau) \diff \tau \\
    &= 4 \expec\closed{\modu{Z-\text{median}(Z)}}
\end{align*}
and the last term may be upper bounded using that $\expec\closed{\modu{Z-\text{median}(Z)}} \leq \expec\closed{\modu{Z}}$ combined with Jensen inequality to finally obtain $\int_0^{\bar{Z}} f_W(\tau)^2 \diff \tau \leq 4 \sqrt{\var[Z]}$ which gives the final bound
\begin{equation*} 
 \ent(M|S) \leq \log(t)+\frac{8\log(e)}{t}\sqrt{\var\closed{Z}}+h(W_Z).
\end{equation*}
Observe that the optimality gap is
\begin{equation*}
    \frac{8\log(e)}{t}\sqrt{\var[Z]}+h(W_Z)-h(D_Z).
\end{equation*}
Thus it is sufficient to bound $h(W_Z)-h(D_Z)$. As $f_W$ and $f_D$ are translation invariant, then we assume without loss of generality that the mode of $f_Z$ is $0$. Using the symmetry of $f_Z$, we have $b^+_Z(x)=-b^-_Z(x)$. Then we recover
\begin{align*}
    h(D_Z)&=-\int_0^{\bar{Z}} 2b^+_Z(x)\log\open{2b^+_Z(x)} \diff x,\\
    h(W_Z)&=-\int_0^{\bar{Z}} 2b^+_Z(x)\log\open{b^+_Z(x)+b^+_Z(\bar{Z}-x)} \diff x.
\end{align*}
As $b^+_Z(x)\geq 0$, then
\begin{align*}
    h(W_Z) - h(D_Z) &\leq \int_0^{\bar{Z}} 2b^+_Z(x) \log\open{\frac{2b^+_Z(x)}{b^+_Z(x) + b^+_Z(\bar{Z}-x)}} \diff  x\\
    &= 2 \int_0^{\bar{Z}} D_Z(X) \diff x = 2
\end{align*}
Similarly as $f_W(X)\geq \eta_Z$, 
\begin{align*}
    h(W_Z) &\leq \int_0^{\bar{Z}} 2b^+_Z(x) \log\open{\frac{1}{\eta_Z}} \diff x\\
    &= -\log(\eta_Z)
\end{align*}

\subsection{Proof of Proposition \ref{prop:min_step2}}

For the Gaussian case, refer to \citet[Page~22]{wilson2000layered}. We prove the case for Laplace random variable with a similar strategy. Note that $f_W$ is translation invariant w.r.t to $f_Z$, so we consider zero mean Laplace. For $Z\sim \text{Laplace}\open{0,b}$, we have $f_W(x)=-b\ln(2bx)-b\ln(1-2bx)$. Solving $\mathrm{d}f_W(x^\star)/\mathrm{d}x=0$ gives $x^\star=1/4b$ so that $f_W\open{x^\star}=2b\ln2$ and plug the value $b=\sigma / \sqrt{2}$ to conclude.

\subsection{Proof of Proposition \ref{prop:decomp_ainq}}

Condition on $A=a,B=b$. Subtractive dithering gives
\[
\frac{aw}{n}\left(\left\lceil \frac{x_{i}}{aw}+S_{i}\right\rfloor -S_{i}\right)-\frac{x_{i}}{n}\sim\mathcal{U}\left(-\frac{aw}{2n},\,\frac{aw}{2n}\right).
\]
Hence,
\begin{align*}
Y-\frac{1}{n}\sum_{i=1}^{n}x_{i} & =\frac{aw}{n}\left(\sum_{i=1}^{n}\left\lceil \frac{x_{i}}{aw}+S_{i}\right\rfloor -\sum_{i=1}^{n}S_{i}\right)-\frac{1}{n}\sum_{i=1}^{n}x_{i}+b\\
 & \sim\mathrm{IH}\left(n,\,b,\,\frac{n}{12}\left(\frac{aw}{n}\right)^{2}\right)\\
 & =\mathrm{IH}\left(n,\,b,\,a^{2}\sigma^{2}\right)
\end{align*}
has the same distribution as $aZ+b$ where $Z\sim P=\mathrm{IH}(n,0,\sigma^{2})$.
If we randomize over $A,B$, since $(A,B)\sim\pi_{A,B}\in\Pi_{A,B}(P,Q)$,
we have $Y-\frac{1}{n}\sum_{i=1}^{n}x_{i}\sim Q$ by the definition of mixture set.

We then demonstrate that the mechanism is homomorphic. Since $\overline{\mathscr{D}}$ only depends on $m_1,\ldots,m_n$ through $\sum_i m_i$, it allows the server to decode using only $\sum_i m_i$ and the shared randomness. While $\overline{\mathscr{D}}$ is technically not in the form (\ref{eq:hom}) due to the ``$+b$'' term, that term it can be absorbed into $s_i$. Let $b'_i = b/n$, and treat $(s_i,b'_i)$ as the common randomness between client $i$ and the server. We then have $\overline{\mathscr{D}}((m_i)_i,(s_i,b'_i)_i,a)= \frac{a w}{n}\left(\sum_{i=1}^n m_i- \sum_{i=1}^n (s_i-b'_i)\right)$, which is in the form (\ref{eq:hom}) by taking $\mathscr{D}(m,(s,b'),a)=aw(m-s+b')$.

\medskip

\subsection{Proof of Proposition \ref{prop:dp_sigm}}

The SIGM algorithm returns the mean estimate $\bar \mu = (\bar \mu_1,\ldots, \bar \mu_d)$ where for $j \in [d]$ we have
\begin{align*}    
{\bar \mu}(j) =  (\gamma n\sqrt{\tilde{n}})^{-1}\sum_{i:\,B_i(j)=1} \mathscr{D}(M_i(j),S_i(\cdot,j)).
\end{align*}
Similarly to the proof in \ref{subsec:sigm_alg}, we have $\bar \mu_j - (n \gamma)^{-1} \sum_{i:B_{i}(j)=1} x_i(j) \sim \mathcal{N}(0,\sigma^2)$ and conclude by invoking the proof of \citep[Theorem 4.1]{chen2023privacy} to obtain the bound on $\sigma^2$.

For the communication cost per client, Proposition  \ref{prop:min_step2} gives 
\begin{align*}
    \modu{\supp M}\leq 2 + \frac{t}{2 \sigma \sqrt{n \gamma} \sqrt{\ln 4}}.
\end{align*}
Since the data $x_1,\ldots,x_n$ belong to $[-c;c]^d$ and the clients encode $x_i \sqrt{\tilde n}$, we have $t = 2 c \sqrt{\tilde n}$. Since $\expec[\tilde n] = \gamma n$ and only $\gamma d$ coordinates are selected on average, taking the log on both sides gives the desired result.

%We are interested in finding the error between the true mean $\mu = \frac{1}{n}\sum_{i=1}^n x_i$ and of $\bar{\mu}$ knowing that the $\ell_2$ norm is bounded $\|x_i\|_2 \leq c$.
%\begin{align*}
%    n^2 \expec \closed{\norm{\mu - \bar{\mu}}^2_2} 
%    &= \expec \closed{\norm{\sum_{i=1}^n\open{\frac{1}{\gamma}\diag(B_i)x_i+\frac{1}{\gamma}\mathcal{N}(0,\sigma^2 I) - x_i}}_2^2} \\
%    &\stackrel{a}{=} \expec \closed{\norm{\sum_{i=1}^n\open{\frac{1}{\gamma}\diag(B_i)x_i- x_i}}_2^2}+ \frac{nd}{\gamma^2}\sigma^2 \\ 
%    &=\sum_{j=1}^d \expec \closed{\open{\sum_{i=1}^n\open{\frac{1}{\gamma}B_i(j)x_i(j)- x_i(j)}}^2}+ \frac{nd}{\gamma^2}\sigma^2 \\ 
%    &=\sum_{j=1}^d \expec \closed{\open{ \frac{1}{\gamma}\sum_{i=1}^nB_i(j)x_i(j)- \sum_{i=1}^nx_i(j)}^2}+ \frac{nd}{\gamma^2}\sigma^2 \\ 
%    &\stackrel{b}{\leq} \frac{nc^2}{\gamma} + \frac{nd}{\gamma^2}\sigma^2  %\label{eq:quant_sample_error_bound}
%\end{align*}
%where $(a)$ is by the independence of the noise and $(b)$ is by \citet[Theorem~4.1]{chen2023privacy}.

%%%%%%%%%%%%%%%%%%%%%%%%%%%%%%%%%%%%%%%%%%%%%%%%%%%%%%
\subsection{Proof of Theorem \ref{thm:irwin_bound}\label{subsec:pf_irwin_bound}}

We have
\begin{align}
 \mathbb{E}\left[\left\lceil \log\left(\frac{t}{|A|d}+3\right)\right\rceil \right]\nonumber 
 & <\mathbb{E}\left[\log\left(1+\frac{3d|A|}{t}\right)\right]+\log\frac{t}{d}+\mathbb{E}[-\log|A|]+1\nonumber \\
 & \le-\mathbb{E}[\log|A|]+\frac{3d\log e}{t}\mathbb{E}\left[|A|\right]+\log\frac{t}{d}+1.\label{eq:pf_irwin_bound}
\end{align}
Recall that if $Z\sim P$ is independent of $A,B$, then $AZ+B\sim Q$.
We have
\begin{align*}
\mathbb{E}\left[|AZ+B|\right] & =\mathbb{E}\left[\mathbb{E}\left[|AZ+B|\,\Big|\,A,B\right]\right]\\
 & \stackrel{(a)}{\ge}\mathbb{E}\left[\mathbb{E}\left[|AZ|\,\Big|\,A,B\right]\right]\\
 & =\mathbb{E}\left[|A|\right]\mathbb{E}\left[|Z|\right],
\end{align*}
where (a) is because the median of $aZ$ is $0$ for any $a\in\mathbb{R}$,
so $\mathbb{E}[|aZ+b|]$ is minimized when $b=0$. Combining this
with (\ref{eq:pf_irwin_bound}) gives the desired result.

%%%%%%%%%%%%%%%%%%%%%%%%%%%%%%%%%%%%%%%%%%%%%%%%%%%%%%
\subsection{Proof of Theorem \ref{thm:hM_symm}}\label{subsec:pf_hM_symm}

Write $\mathrm{U}:=\mathcal{U}(-1/2,1/2)$. Express $g(x)$ as a
mixture
$g(x)=\lambda f(x)+(1-\lambda)\psi(x)$.
We have
\begin{align*}
h_{\mathrm{M}}(g\Vert f) & \stackrel{(a)}{\ge}(1-\lambda)h_{\mathrm{M}}(\psi\Vert f)\\
 & \stackrel{(b)}{\ge}(1-\lambda)\left(h_{\mathrm{M}}(\psi\Vert\mathrm{U})+h_{\mathrm{M}}(\mathrm{U}\Vert f)\right)\\
 & \stackrel{(c)}{\ge}-(1-\lambda)\left(\log\frac{g(0)-\lambda f(0)}{1-\lambda}+Lf(0)+\log\frac{eL}{2}\right),
\end{align*}
where (a) is by concavity, (b) is by the chain rule (Proposition \ref{prop:mixture_prop}),
and (c) is by Corollary \ref{cor:unif_to_f} and Lemma \ref{lem:f_to_unif}.

%%%%%%%%%%%%%%%%%%%%%%%%%%%%%%%%%%%%%%%%%%%%%%%%%%%%
% \subsection{Proof of Proposition \ref{prop:mixture_prop}\label{subsec:pf_mixture_prop}}

% \subsection{Proof of Lemma \ref{lem:f_to_unif}\label{subsec:pf_f_to_unif}}

\newpage
\section{Additional Numerical Experiments} \label{app:supp_numerical}
\textbf{Comparing AINQ mechanism with Quantization.}
Applying the shifted layered quantizer with a required compression error $\ncal(0,\sigma^2 \mathrm{I}_d)$ coordinate-wise to an input $x \in \rset^d$ that is unbounded does not necessarily lead to a bounded encoding $M$ (alternatively bounded number of bits to send $M$). However, a classical implementation of quantization starts by specifying the number of bits, then normalizing $x$ by $\norm{x}_p$ to scale it to the $\ell_p$ unit sphere and finally perform quantization on each coordinate using $b$ bits. For decompression, $\norm{x}_p$ is shared with the decoder to rescale the quantized vector. The communication cost of sharing $\norm{x}_p$ may be ignored as it is often negligible after normalizing by the dimension $d$. To have a fair comparison with classical quantization schemes, in DP experiments, shifted layered quantization is used to reproduce the desired error distribution. Then, the number of bits used by the shifted layered quantizer is measured on the one hand and the number of bits for quantization is adjusted on the other hand such that it is always equal or higher than the number of bits used for shifted layered quantizer.

For Langevin dynamics, the number of bits $b$ is specified first. Each client $i \in [n]$ scales its vector $x_i$ by $\norm{x_i}_\infty$. The variance $\sigma^2_b$ used for the compression is calculated using Proposition \ref{prop:min_step2} with $t=2$. Finally, decoding achieves a compression error of variance $\sigma_b^2 \norm{x}_\infty^2$. As $\sigma^2_b$ is known and the different norms $\ens{\norm{x_i}_\infty}_{i\in [n]}$ are shared, the server does some accounting to calculate the error variance of the distributed mean operation and adjusts the added error to reproduce the desired Markov chain. Refer to Algorithm \ref{algo:QLSD-star} for more details.

\textbf{Software/Hardware details.} The code to reproduce the curves in the Figures below is available upon request. Other experiments were reproduced through the original implementation of \citet{kairouz2021advances} and \citet{vono2022qlsd}. The experiments on differential privacy applications (see Appendix \ref{subset:num_dp}) were performed on a laptop Intel Core i7-10510U CPU 1.80GHz $\times$ 8. The experiments on stochastic Langevin dynamics (see Appendix \ref{app:langevin}) with toy gaussian data were performed on 4 cores of slurm-based cluster with 2.1 Ghz frequency. The computations for Langevin experiments required around 72 hours.%  Bayesian neural networks experiments were performed on one GPUs $A100$.

\subsection{Compression for free in Differential Privacy}
\label{subset:num_dp}

\textbf{Subsampling and Trusted Server.} The parameter configuration for the experiments in Section \ref{sec:5_dP_with_comp} are: number of clients $n \in \{1000;2000\}$, dimension $d\in \{100;500\}$. The numerical results of Figure \ref{fig:csgm_sigm_compare} below consider the setting $d=500$ and $n \in \{250;500;1000\}$. For a subsampling parameter $\gamma \in \{0.3;0.5;1.0\}$, we set $\delta=10^{-5}$ and privacy budget $\varepsilon \in [0.5;4]$. The data is generated in a similar spirit to \citet{chen2023privacy}. Denote by $X_{i}(j)$ the $j$-th coordinate of the $i$-th client data. Then, for $i \in [n]$ and for $j \in [d]$, $X_{i}(j) \sim (2 \mathcal{B}(p)-1)U/\sqrt{d}$ where $\mathcal{B}(p)$ is a Bernoulli variable with parameter $p=0.8$ and $U \sim \mathcal{U}(0,1)$ is a standard uniform variable. The multiplication by this uniform variable allows to work on continuous data $X$ whereas the experiments in \citep{chen2023privacy} consider discrete values. Figure \ref{fig:csgm_sigm_compare} below reports the results of additional experiments obtained over $100$ independent runs, where we plot the mean squared error curves against the privacy budget $\varepsilon$ in dimension $d=500$ with different number of clients $n \in \{250;500;1000\}$. 

 \begin{figure}[h!]
  \centering
  \subfigure[$n=250$]{
  \includegraphics[scale=0.45]{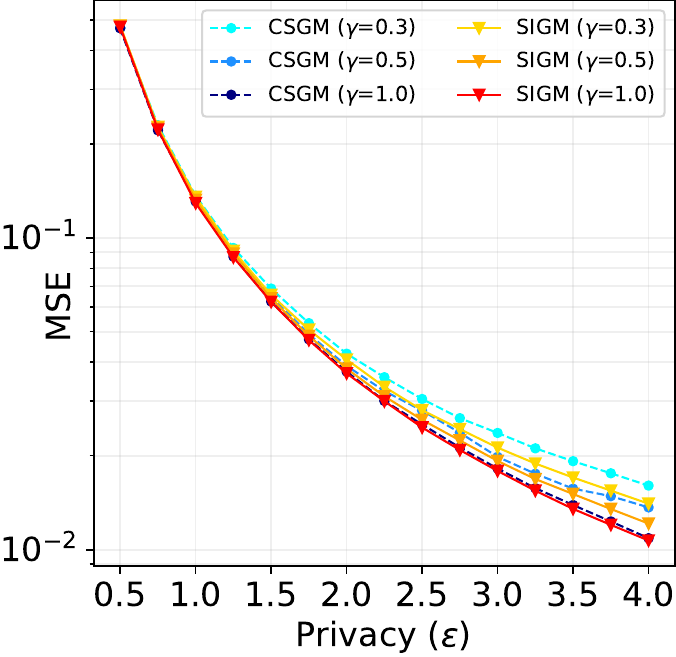}\label{fig:trusted_n100}}
  \subfigure[$n=500$]{
  \includegraphics[scale=0.45]{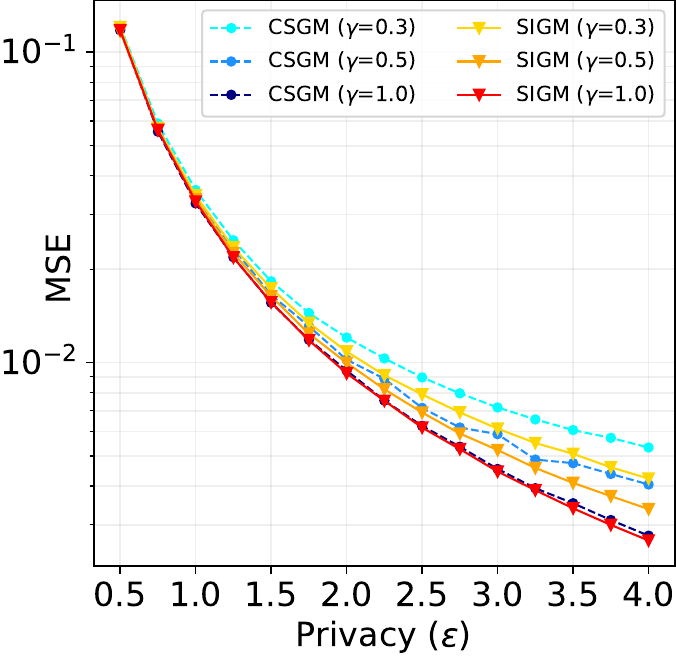}\label{fig:trusted_n200}}
  \subfigure[$n=1000$]{
  \includegraphics[scale=0.45]{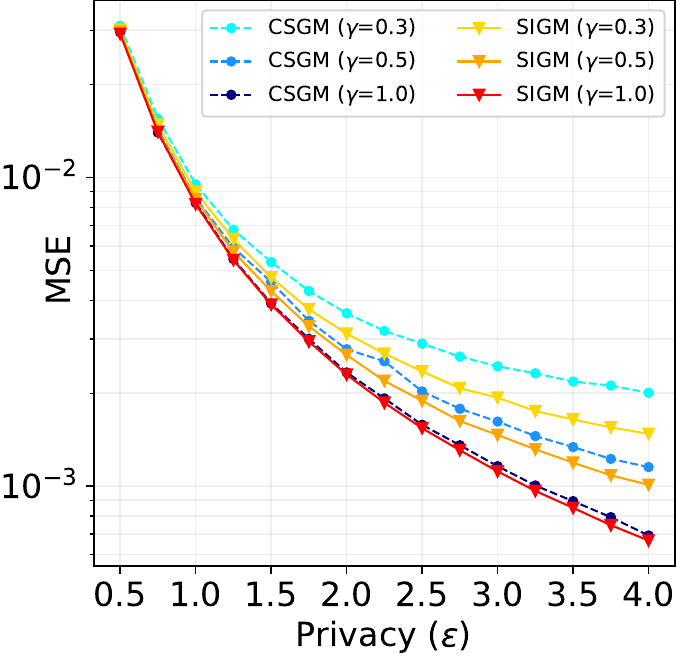}\label{fig:trusted_n500}}
    \caption{Mean Squared Error for CSGM and SIGM with $n \in \{250;500;1000\}$ and $d=500$.}
    \label{fig:csgm_sigm_compare}
\end{figure}

\pagebreak

\textbf{Less-Trusted Server.} The Distributed Discrete Gaussian (DDG) mechanism \citep{kairouz2021distributed} can leverage SecAgg to achieve DP guarantees against the server, which is a stronger setting than that of the \textit{less-trusted server}. However, in practical implementation, it often requires a much higher number of bits per coordinate. The DDG mechanism relies on the discrete Gaussian distribution \citep{canonne2020discrete} $\mathcal{N}_\zset(\mu, \sigma^2)$ with $\mu, \sigma \in \rset$ and $\sigma>0$. For any $x \in \zset$, $\prob_{\mathcal{N}_\zset}(X=x)$ is proportional to $\exp(-(x-\mu)^2/(2\sigma^2))$. Using this discrete Gaussian noise, the DDG mechanism works as follows. The input parameters are: scaling factor $\gamma$; clipping threshold $c>0$; bias $\beta\geq0$; modulus $m\in \nset$ and noise scale $\sigma^2$. The shared randomness is composed of a random unitary matrix $U \in \mathbb{C}^{d\times d}$. Each client $i \in [n]$ first scales and clips the data as $x_i^{'} =\gamma^{-1}\min \ens{1, c/\norm{x^i}_2}\cdot x^i$ then rotate $x_i^{''}$ = $U x_i^{'}$ and quantize $x_i^{''}$ to $M_i$ such that $\norm{M_i}_2\leq C(\gamma,\beta,c)$. The worker finally sends to the server $\tilde{M}_i = M_i + G_i \mod m$, where $G_i\sim \mathcal{N}_\zset (0,\sigma^2)$ using SecAgg. The server receives the results of SecAgg $\tilde{M} = \sum_i^n \tilde{M}_i \mod  m $ and outputs $(\gamma/n)U^*M$ (rescale and inverse the rotation). For more details, refer to Algorithms 1 and 2 in \citet{kairouz2021distributed}. Note that some recent work of \citet{chen2022fundamental} aims at improving communication efficiency of DDG by applying projections to reduce the dimension of the data transmitted. However, it still uses DDG as a subroutine so we restrict our comparison to DDG.

To compare with the utility-communication trade-off of DDG, we adapt the experiments from \citet{kairouz2021distributed}. On the one hand, we reproduce the figures comparing the utility of DDG with different number of bits against the standard Gaussian mechanism. On the other hand, using Elias gamma coding, we measure over $50$ runs the average number of bits needed for the {aggregate Gaussian mechanism} and the {shifted layered quantizer} to match the Gaussian mechanism. Note that only the {aggregate Gaussian mechanism} is compatible with SecAgg. Figures \ref{fig:ddg_compare} and \ref{fig:shifted_compare} below presents the mean squared error curves for different number of clients $n \in \{100;500;1000\}$ and highlights that AINQ mechanisms requires much lower number of bits to realize the Gaussian mechanism. The graphs in Figure \ref{fig:shifted_compare} compare the number of bits per client for the shifted layered quantizer (fixed or variable length) and the aggregate Gaussian mechanism, for a privacy budget $\varepsilon \in [1,10]$. 
\vspace{-0.2cm}
 \begin{figure}[h!]
  \centering
  \subfigure[$n=100$]{
  \includegraphics[scale=0.43]{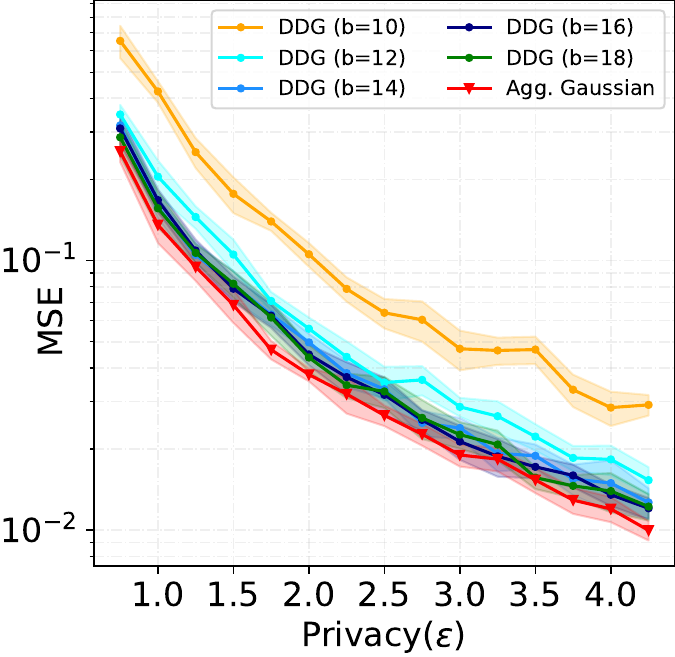}\label{fig:ddg_n100}}
  \subfigure[$n=500$]{
  \includegraphics[scale=0.43]{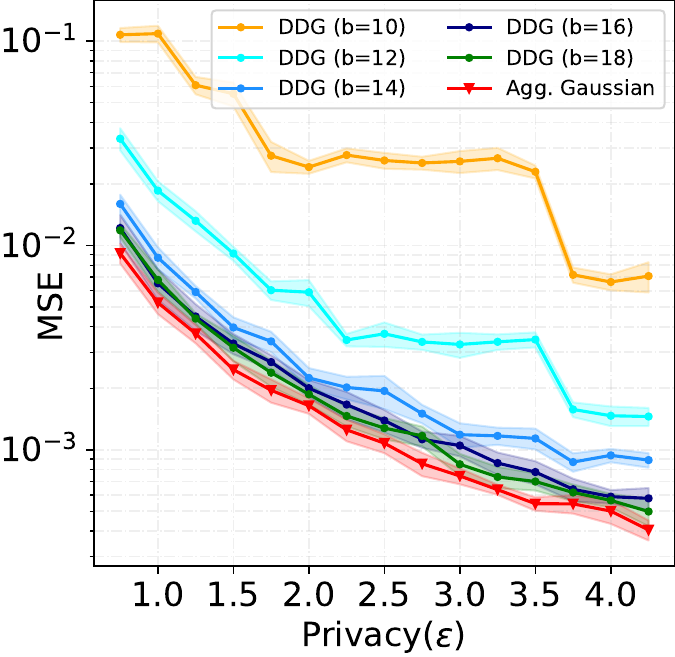}\label{fig:ddg_n500}}
  \subfigure[$n=1000$]{
  \includegraphics[scale=0.43]{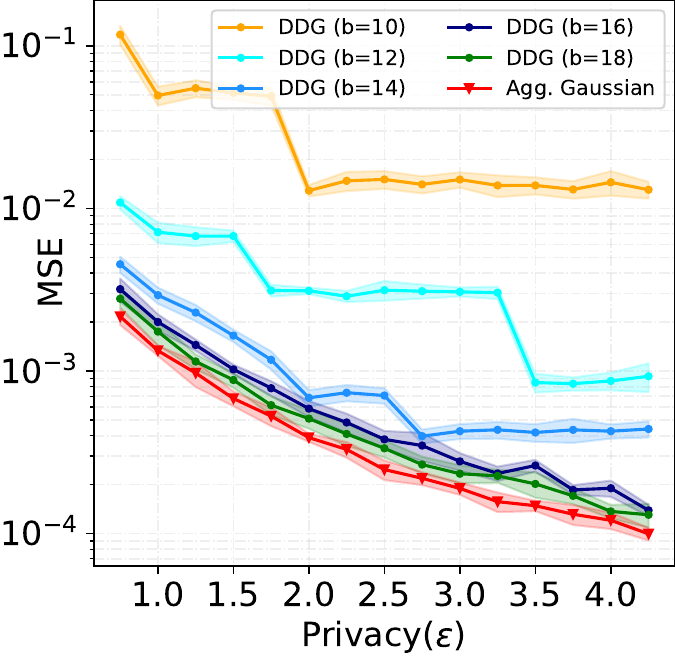}\label{fig:ddg_n1000}}
  \vspace{-0.33cm}  \caption{Mean Squared Error for the DDG mechanism and Aggregate Gaussian mechanism for $n \in \{100;500;1000\}$, $\delta=10^{-5}$ and $d=75$. Data samples drawn from the $\ell_2$ sphere with radius $c=10$.}
    \label{fig:ddg_compare}
\end{figure}

 \begin{figure}[h!]
  \centering
  \subfigure[Aggregate Gaussian]{
  \includegraphics[scale=0.38]{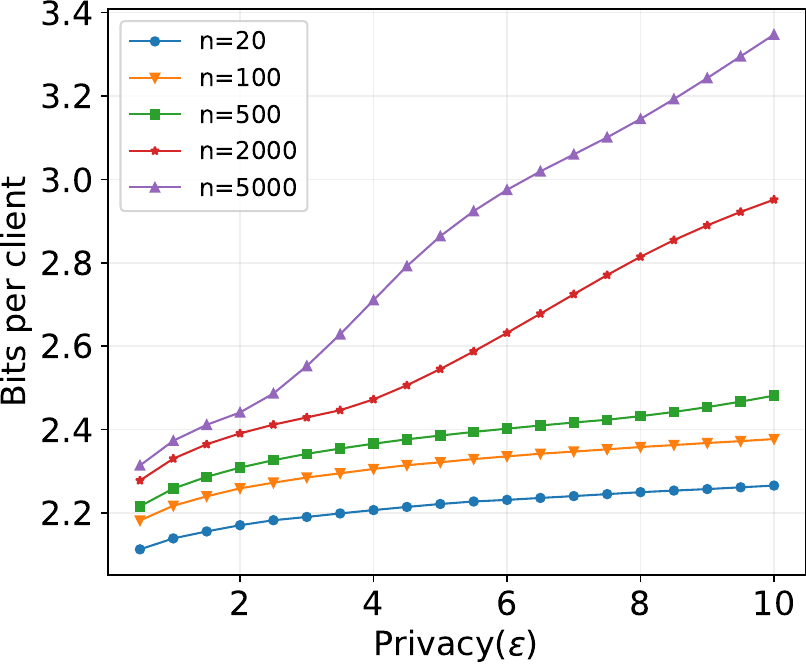}\label{fig:avg_agg}}
  \subfigure[Shifted Layered - Upper Bound]{
  \includegraphics[scale=0.38]{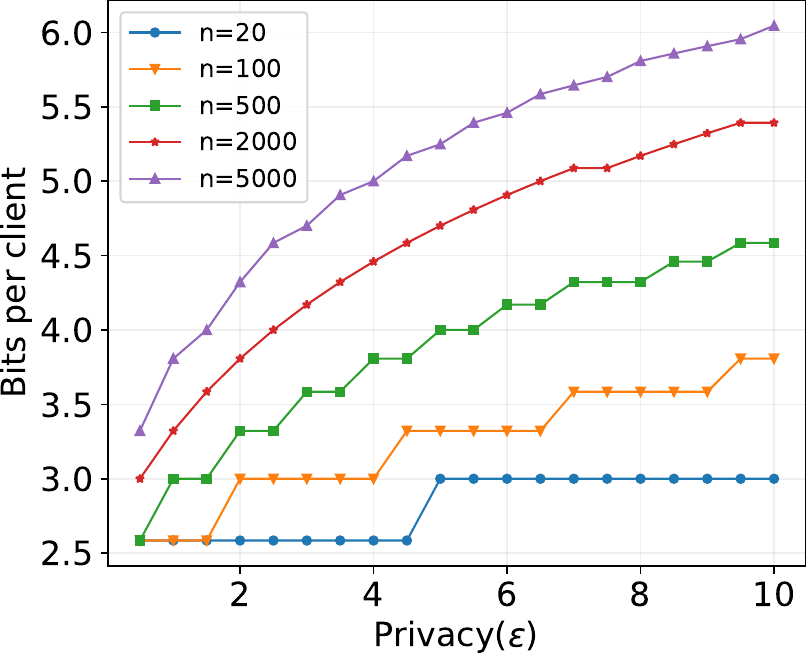}\label{fig:bits_shifted}}
  \subfigure[Shifted Layered]{
  \includegraphics[scale=0.38]{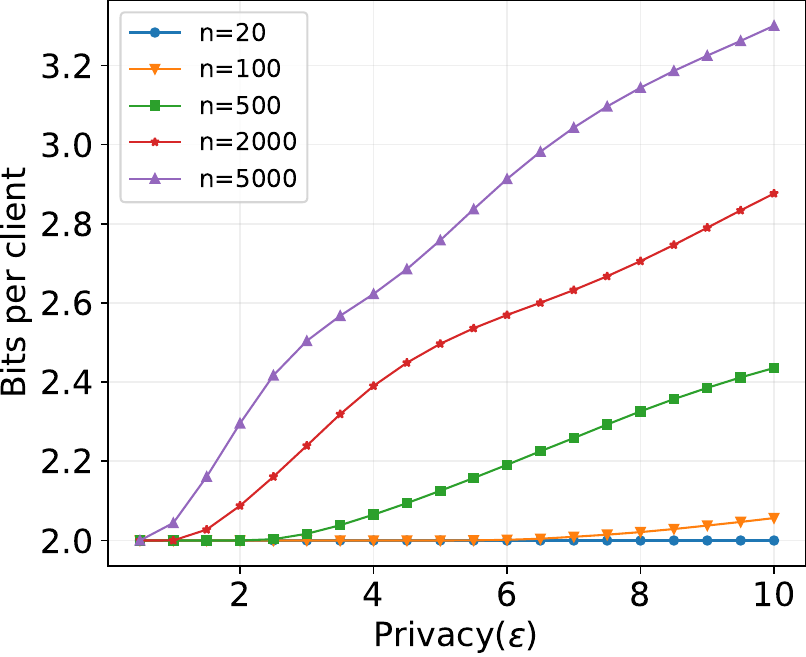}\label{fig:upper_bound_shifted}}
   \vspace{-0.33cm} \caption{Number of bits per clients for the Aggregation Gaussian mechanism (left), shifted layered quantizer with fixed (center) or variable (right) length and different client size $n \in \{20;100;500;2000;5000\}$.}
    \label{fig:shifted_compare}
\end{figure}

\subsection{Langevin Dynamics}
\label{app:langevin}

\subsubsection{Quantized Langevin Stochastic Dynamics}
We consider the same FL framework as \citet{vono2022qlsd} where the goal is to perform Bayesian inference on a parameter $\theta \in \rset^d$ with respect to a dataset $\mathcal{D}$. The posterior distribution is assumed to admit a product-form density with respect to the $d$-dimensional Lebesgue measure in the form
\begin{align*}
\vspace{-0.2cm}
    \pi(\theta|\mathcal{D}) = C_{\pi}^{-1} \prod_{i=1}^n e^{-U_i(\theta)}, \qquad C_\pi = \int_{\rset^d} \prod_{i=1}^n e^{-U_i(\theta)} \mathrm{d}\theta
    \vspace{-0.2cm}
\end{align*}
where $U_i$ are clients' potential functions. Using a sequence $(H_k)_{k \in \nset}$ of unbiased estimates of $\nabla U = \sum_{i=1}^n U_i$, the Langevin dynamics with stochastic gradient aims at sampling from a target distribution with density $\pi$. Starting from $\theta_0 \in \rset^d$, it is a Markov chain defined by the update rule: $\theta_{k+1} = \theta_k - \gamma H_{k+1}(\theta_k) + \sqrt{2 \gamma} Z_{k+1}$ where $\gamma>0$ is a discretization stepsize and $(Z_k)_{k \in \nset}$ is a sequence of i.i.d. standard Gaussian random variables.

In the framework of \citet{vono2022qlsd}, at iteration $k$, a subset  $\mathcal{A}_{k+1} \subset [n]$ of clients is selected. Each client $i \in \mathcal{A}_{k+1}$ computes an estimate of $H_{k+1, i}$ and sends $\mathscr{C}(H_{k+1,i} - \widetilde{H}_{k+1,i})$ to the server where $\mathscr{C}$ is a compression operator and $\widetilde{H}_{k+1,i}$ is a variance reduction term. The server aggregates the clients gradients estimators, potentially compensates for the variance reduction, and updates the process. Throughout the remaining of this subsection, we consider $\mathscr{C}$ to be a shifted layered quantizer with a fixed number of bits as described at the beginning of Appendix \ref{app:supp_numerical} and returning $x+\ncal(0,\sigma_b^2\norm{x}_\infty^2), \;\; \sigma_b^2\norm{x}_\infty^2  \leftarrow \mathscr{C}(x)$.

\subsubsection{Experiments on Gaussian}

We adapt this experiment from \citet{vono2022qlsd}. We use $n=20$ clients with data dimension $d=50$ and local potentials $U_i(\theta) = \sum_{j=1}^{N_i} \| \theta - y_{i,j} \|^2/2$ where $(y_{i,j})$ is a set of synthetic independent but not identically distributed observations across clients and $N_i=50$. The data is generated by sampling $y_{i,j} \sim \ncal(\mu_i, \mathrm{I}_d)$ with $\mu_i \sim \ncal(0,25\mathrm{I}_d)$. We set the discretization stepsize $\gamma=5\times 10^{-4} $ and  use full participation and full batch. We adapt the \texttt{QLSD}$^{\star}$ algorithm from \citet{vono2022qlsd} to leverage the error of compression in the Langevin dynamics. If the error of compression is smaller than the required error, then the server adds additional noise. Otherwise, the server adds no additional noise to the dynamics. The theoretical analysis in \citet{vono2022qlsd} still holds with this adaptation as it still satisfies the required assumptions on the compression operator (refer to their assumption \textbf{H3}).
\begin{figure}[h!]
    \centering
    \includegraphics[scale=0.45]{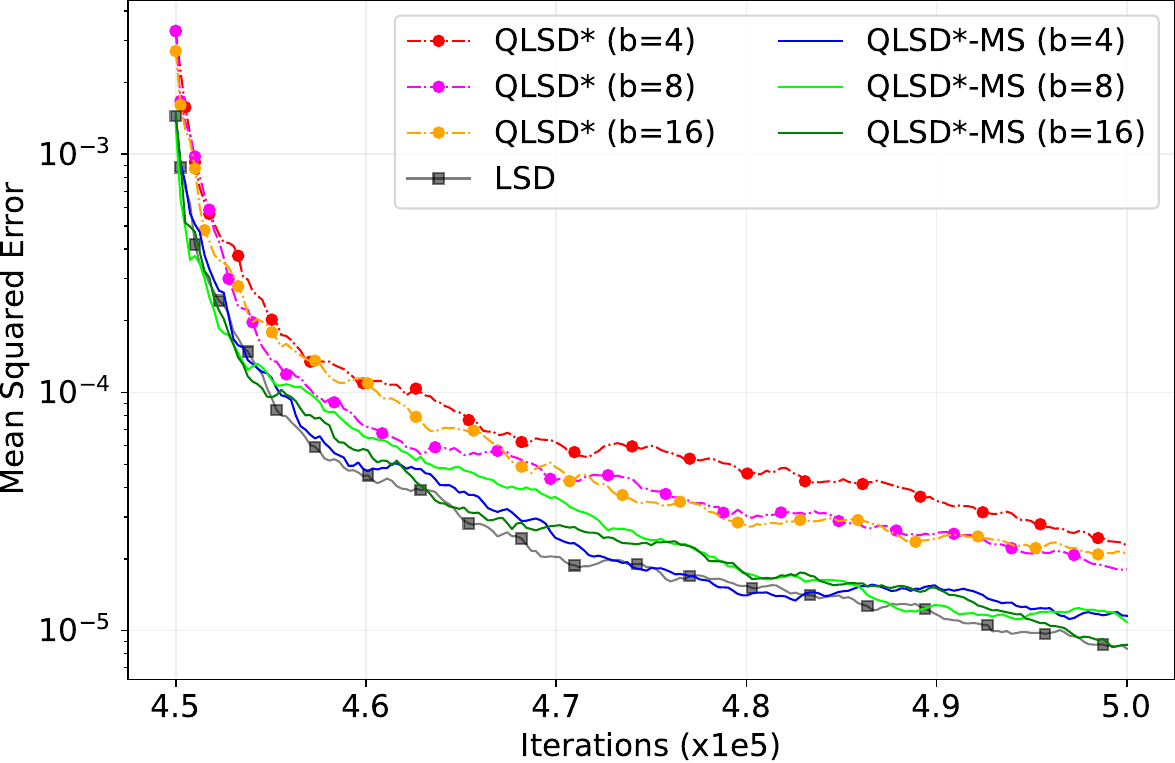}
    \vspace{-0.2cm}
    \caption{MSE of different methods, b refers to the number of bits, QLSD$^*$ to \texttt{QLSD}$^{\star}$ with unbiased quantization, QLSD$^*$-MS to the algorithm with shifted layered quantization, LSD to the algorithm with no compression}
    \label{fig:qlsd}
\end{figure}

Figure \ref{fig:qlsd} above shows the behavior of the mean squared error (MSE) between the true parameter and the sampling based estimation. We start sampling after $4.5\times 10^5$ iteration to be sure the chain converge to a stationary distribution. The results were computed using $30$ independent runs of each algorithm. Observe the great performance of shifted layered quantizers schemes. Indeed, all schemes using standard unbiased quantization performs worse than all schemes with shifted layered quantizers.

\begin{algorithm}[h]
   \caption{\texttt{QLSD}$^{\star}$ with Shifted Layered Quantizer}
   \label{algo:QLSD-star}
  \begin{algorithmic}
     \State {\bfseries Input:} minibatch sizes $\{n_i\}_{i \in [b]}$, number of iterations $K$, compression operators $\mathscr{C}$, step-size $\gamma \in (0,\bar{\gamma}]$ with $\bar{\gamma} > 0$ and initial point $\theta_0$.
     \For{$k=0$ {\bfseries to} $K-1$}
     \For{$i \in \mathcal{A}_{k+1}$ \Comment{On active clients}}
        \State Draw $\mathcal{S}_{k+1}^{(i)} \sim \mathrm{Uniform}\pr{\wp_{N_i,n_i}}$.
        \State {\small Set $H_{k+1}^{(i)}(\theta_k) = (N_i/n_i)\sum_{j \in \mathcal{S}_{k+1}^{(i)}} [\nabla U_{i,j}(\theta_k) - \nabla U_{i,j}(\theta^{\star})]$.}
        \State Compute $\textsl{g}_{i,k+1}, v_{i,k+1} \leftarrow
        \mathscr{C}\pr{H_{k+1}^{(i)}(\theta_k)}$.
        \State Send $\textsl{g}_{i,k+1}, v_{i,k+1}$  to the central server.
     \EndFor
     \State \Comment{On the central server}
     \State Compute $\textsl{g}_{k+1} = \frac{b}{|\mathcal{A}_{k+1}|}\sum_{i \in \mathcal{A}_{k+1}}\textsl{g}_{i,k+1}$.
     \State Draw $Z_{k+1} \sim \mathrm{N}(0_d,\mathrm{I}_d)$.
     \State Compute $\beta^2 = \max \open{0, 2\gamma- \frac{b^2\gamma^2}{|\mathcal{A}_{k+1}|^2}\sum_{i\in\mathcal{A}_{k+1} }v_{i,k+1}}$
     \State Compute $\theta_{k+1} = \theta_k - \gamma \textsl{g}_{k+1} + \beta Z_{k+1}$.
     \State Send $\theta_{k+1}$ to the $b$ clients.
     \EndFor
     \State {\bfseries Output:} samples $\{\theta_k\}_{k=0}^{K}$.
  \end{algorithmic}
\end{algorithm}

\section{Compression for Randomized Smoothing in Federated Learning}
\label{app:compression_smoothing}

In this section, we show how the proposed quantizers with exact error distribution can be applied to obtain optimal algorithms for non-smooth
distributed optimization problems \citep{scaman_optimal_2018}. In particular, we highlight the link between compression and randomized smoothing \citep{duchi_randomized_2012}. In the framework of federated learning with bi-directional compression \citep{philippenko2021preserved}, it allows us to derive fast convergence rates for non-smooth objectives. To the best of found knowledge, such rates for federated learning with client compression on non-smooth objectives are novel.

\textbf{Federated Learning with double compression.} Consider some optimization problems of the form $$\min_{\theta \in \rset^d} \left\{f(\theta)= \frac{1}{n} \sum_{i=1}^n f_i(\theta) \right\}$$ where $f_i$ are local losses and $f$ is a convex and potentially non-smooth objective function. This distributed setting includes FL problems with $\ell_1$ norms used as regularizers to ensure sparsity, \textit{e.g.} $f(\theta) = \frac{1}{n} \|A\theta - b\|_1 = \frac{1}{n} \sum_{i=1}^n |a_i^\top \theta - b_i|$ with $A \in \rset^{n \times d}, b \in \rset^n$ but also neural networks with ReLU activation function.

In the standard setting of Federated Averaging with bi-directional compression, the model parameter $\theta_k$ at time step $k \in \nset$ is updated as
\begin{align*}
    \theta_{k+1} = \theta_{k} - \gamma \mathscr{C}_{\downarrow}\left(\frac{1}{n}\sum_{i=1}^n  \mathscr{C}_{\uparrow}(g_i(\theta_k))\right)
\end{align*}
where $\gamma>0$ is the learning rate, $\mathscr{C}_{\uparrow}$ (resp. $\mathscr{C}_{\downarrow}$) is the up-link/client compressor (resp. the down-link/server) and $g_i(\theta_k)$ is a subgradient of the local loss $f_i$ such that $\expec[g_i(\theta_k)] \in \partial f_i(\theta_k)$.

\textbf{Distributed Randomized Smoothing.} Fast rates for non-smooth optimization problems can be attained using the smoothing approach of \citet{duchi_randomized_2012,scaman_optimal_2018}. For $\sigma>0$, denote by $f_{\sigma}$ the \textit{smoothed} version of $f$ defined by
\begin{align*}
    f_{\sigma}(\theta) = \expec_{\xi}[f(\theta + \sigma \xi)],
\end{align*}
where $\xi \sim \mathcal{N}(0,\mathrm{I}_d)$ is a standard Gaussian random variable. If $f$ is $L$-Lipschitz then $f_{\sigma}$ is $(L/\sigma)$ smooth and it holds (see Lemma E.3 in \citet{duchi_randomized_2012})
\begin{align*}
    \forall \theta \in \rset^d, \quad f(\theta) \leq f_{\sigma}(\theta) \leq  f(\theta) + \sigma L \sqrt{d}.
\end{align*}
Thus, accelerated optimization algorithms such as Distributed Randomized Smoothing (DRS) can be applied by using the \textit{smoothed} version of $f$. These algorithms rely on approximating the smoothed gradient $\nabla f_{\sigma} = \frac{1}{n} \sum_{i=1}^n \nabla f_{i,\sigma}$ with sampled subgradients of the form $g_i(\theta + \sigma \xi_j)$ with $j=1,\ldots,m$. Each local client $i=1,\ldots,n$ can sample standard Gaussian variables then compute $\hat g_i(\theta) = \frac{1}{m} \sum_{j=1}^m g_i(\theta + \sigma \xi_j)$ and send it to the server which aggregates the subgradients. Interestingly, the sampling steps may be replaced with compressors that produce exact error distribution.

\textbf{Quantizers act as randomized smoothing.} The idea of sampling a random perturbation $\xi$ to evaluate the subgradients as $g_i(\theta + \sigma \xi)$ can be replaced by first compressing the model parameter $\theta$ with a Gaussian error distribution as $\ecal(\theta) = \theta + \sigma \xi$ and then evaluating the subgradients at compressed point as $g_i(\ecal(\theta))$. Similarly to \citet{philippenko2021preserved}, let us consider the update rules where the subgradients are evaluted at \textit{perturbed} point as
\begin{align*}
    \theta_{k+1} = \theta_k - \frac{\gamma}{n} \sum_{i=1}^n \hat g_i(\hat \theta_k); \quad 
    \hat \theta_k = \ecal (\theta_k) = \theta_k + \xi_k,
\end{align*}
with a Gaussian error $\xi_k = \ecal (\theta_k) - \theta_k \sim \mathcal{N}(0,\sigma^2 \mathrm{I}_d)$. In view of approximating the sampling scheme of DRS, the local clients can perform $m$ local compressions so that $\hat g_i(\hat \theta_k) = \frac{1}{m} \sum_{j=1}^m g_i(\theta_k + \sigma \xi_j)$ which gives an unbiased estimate of the smoothed gradient as $\expec[\hat g_i(\hat \theta_k)|\theta_k] = \nabla f_{i,\sigma}$. Thus, one can exactly recover the Distributed Randomized Smoothing algorithm of \citet{scaman_optimal_2018} and the optimal convergence rates of Theorem 1 therein.

\end{document}